\documentclass{article} 
\usepackage{main,times}


\usepackage{amsmath,amsfonts,bm}









\def\eqref#1{equation~\ref{#1}}









\def\1{\bm{1}}










\DeclareMathAlphabet{\mathsfit}{\encodingdefault}{\sfdefault}{m}{sl}
\SetMathAlphabet{\mathsfit}{bold}{\encodingdefault}{\sfdefault}{bx}{n}













\usepackage{hyperref}
\usepackage{float}
\usepackage{url}
\usepackage{algorithm}
\usepackage{amsmath}

\usepackage{amssymb} 
\usepackage{enumitem}
\usepackage{amsbsy}
\usepackage{booktabs}
\usepackage{graphicx}
\usepackage{colortbl}
\usepackage{subcaption}
\usepackage{newfloat}
\usepackage{listings}
\usepackage{wasysym}
\usepackage{multirow}
\usepackage{xcolor}
\usepackage{pifont}
\usepackage{amsthm}
\usepackage{algpseudocode}
\usepackage{times}
\usepackage{wrapfig}
\usepackage{courier}
\usepackage{helvet}
\usepackage{caption}
\usepackage{natbib}

\newtheorem{definition}{Definition}
\newtheorem{formulation}{Formulation}

\newtheorem{proposition}{Proposition}
\newtheorem{observation}{Observation}
\newtheorem{proposition*}{Proposition}
\newtheorem{observation*}{Observation}

\title{FaLW: A Forgetting-aware Loss Reweighting for Long-tailed Unlearning}


 

\author{Liheng Yu\textsuperscript{\rm 1},
    Zhe Zhao\textsuperscript{\rm 1,3},
    Yuxuan Wang\textsuperscript{\rm 1},
    Pengkun Wang\textsuperscript{\rm 1,2}\footnotemark[1], 
    Xiaofeng Cao \textsuperscript{\rm 4}, \\
    \textbf{Binwu Wang\textsuperscript{\rm 1,2}, 
    Yang Wang\textsuperscript{\rm 1,2}\footnotemark[1]} \\ 
\textsuperscript{\rm 1}University of Science and Technology of China(USTC)\\
\textsuperscript{\rm 2}Suzhou Institute for Advanced Research, USTC\\
\textsuperscript{\rm 3}City University of Hong Kong \\
\textsuperscript{\rm 4}Tongji University\\
\texttt{\{yuliheng, zz4543, xuco, wdcxy\}@mail.ustc.edu.cn,} \\
\texttt{xiaofeng.cao.uts@gmail.com, pengkun@ustc.edu.cn}
}


%

\iclrfinalcopy
\begin{document}

\maketitle

\begin{abstract}
Machine unlearning, which aims to efficiently remove the influence of specific data from trained models, is crucial for upholding data privacy regulations like the ``right to be forgotten". However, existing research predominantly evaluates unlearning methods on relatively balanced forget sets. This overlooks a common real-world scenario where data to be forgotten, such as a user's activity records, follows a long-tailed distribution. Our work is the first to investigate this critical research gap. We find that in such long-tailed settings, existing methods suffer from two key issues: \textit{Heterogeneous Unlearning Deviation} and \textit{Skewed Unlearning Deviation}. To address these challenges, we propose FaLW, a plug-and-play, instance-wise dynamic loss reweighting method. FaLW innovatively assesses the unlearning state of each sample by comparing its predictive probability to the distribution of unseen data from the same class. Based on this, it uses a forgetting-aware reweighting scheme, modulated by a balancing factor, to adaptively adjust the unlearning intensity for each sample. Extensive experiments demonstrate that FaLW achieves superior performance.
\end{abstract}

\section{Introduction}

With the enactment of regulations such as the General Data Protection Regulation (GDPR), the ``right to be forgotten" \citep{regulation2018general,hoofnagle2019european} has become a cornerstone of data privacy protection. In artificial intelligence, the right's realization has given rise to emerging technical paradigms of machine unlearning. Its core objective is to accurately and efficiently remove the influence of specific training samples from a pre-trained ML model \citep{shaik2024exploring}.

\begin{wrapfigure}{R}{0.55\textwidth}
   \vspace{-10pt}
   \centering
   \includegraphics[width=\linewidth]{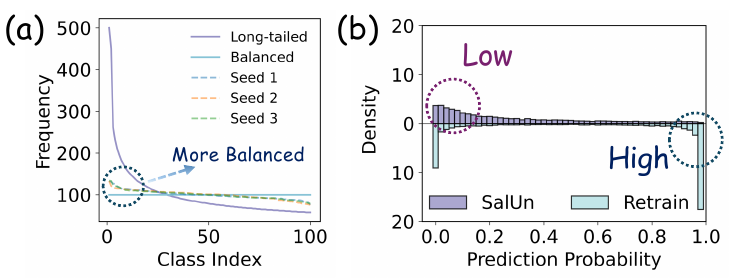}
   \caption{(a) Distributions of a \textbf{20\% CIFAR-100} forget set generated via \textbf{balanced, long-tailed, and random sampling} (Seeds 1-3 denote different runs). (b) Predicted probability distributions for SalUn vs. Retrain (ResNet-18) after unlearning a 20\% randomly sampled forget set from \textbf{CIFAR-100}.}
   \label{fig:over_forget}
\end{wrapfigure} 
Currently, machine unlearning methods can be broadly categorized into two main types: exact unlearning \citep{guo2019certified} and approximate unlearning \citep{izzo2021approximate}.
The former primarily focuses on methods that provide provable guarantees or certified removal.
Within this category, retraining from scratch is considered the gold standard \citep{thudi2022necessity}.
This approach involves training a new model from scratch on the retain set, which is the original dataset minus the data to be forgotten. However, such retraining-based approaches are computationally prohibitive and thus impractical for large-scale models prevalent today.

In contrast, approximate unlearning presents a practical alternative, aiming for fast and effective removal of data influence. While these methods may not always come with formal guarantees, their efficacy is typically evaluated through empirical metrics, such as Membership Inference Attacks (MIA) \citep{carlini2022membership}, and they often operate under fewer constraints on the data, model, or algorithm compared to their exact counterparts. To date, the evaluation of machine unlearning has predominantly centered on image classification tasks, where, typically, the forget set is constructed by randomly sampling a subset of the original training data.


However, we empirically observe (Figure~\ref{fig:over_forget}(a)) that \textit{\textbf{random sampling consistently yields forget sets that are nearly balanced. This stands in stark contrast to many real-world scenarios, where forget requests are often naturally long-tailed.}} For instance, when a user deactivates their online shopping account \citep{balasubramanian2024biased}, their data, shaped by personal interests, is typically concentrated in a few specific categories. Based on this, machine unlearning under such long-tailed distributions remains largely unexplored. To bridge this gap, we introduce the concept of \textbf{\textit{Unlearning Deviation}} to quantify the unlearning outcome for individual samples. Through analysis on imbalanced forget sets, we uncover two critical phenomena: 
\begin{itemize}[left=1.5em]
    \item[\ding{182}] \textbf{\textit{Heterogeneous Unlearning Deviation:}} models tend to over-forget some samples while under-forgetting others.
    \item[\ding{183}] \textbf{\textit{Skewed Unlearning Deviation:}} the magnitude of this deviation is disproportionately severe for samples. 
\end{itemize}

Existing methods, designed with a holistic perspective, are ill-equipped to handle instance-level deviations arising from long-tailed forget distributions. Therefore, an instance-level unlearning method is in urgent need, and similar ideas have also been reflected in \citep{zhao2024makes}.

To address this challenge, we propose \textit{\textbf{FaLW}}, a plug-and-play, instance-wise dynamic loss reweighting method. \textbf{\textit{Our key innovation is to estimate the unlearning deviation for each sample by measuring its predictive probability against the distribution of probabilities for unseen data of the same class.}} 
Based on this, we design a forgetting-aware dynamic weight. Furthermore, we introduce a balancing factor to modulate the sensitivity of these weights, making the adjustment more aggressive for tail-class samples. FaLW dynamically assesses the unlearning state of each instance and adjusts its unlearning intensity accordingly, effectively mitigating both heterogeneous and skewed unlearning deviations. In summary, our main contributions are as follows:
\begin{itemize}[left=1.5em]
    \item[\ding{182}] \textit{\textbf{Brand-new Perspective:}} This is the first work to investigate the long-tailed unlearning problem and formalize the phenomena of \textit{Skewed} and \textit{Heterogeneous Unlearning Deviation}, filling a key gap in the current field.
    \item[\ding{183}] \textit{\textbf{Plug-and-play Solution:}} We propose to utilize the predicted probability distribution over unseen data to assess the instance-wise unlearning state during the process. Based on this, we introduce \textit{FaLW}, a forgetting-aware dynamic loss reweighting that effectively mitigates the identified unlearning deviations in long-tailed scenarios.
    \item[\ding{184}] \textit{\textbf{Superior Effect:}} Extensive experiments on multiple image classification benchmarks demonstrate that our method achieves state-of-the-art performance.
    
    
\end{itemize}

\section{Related Work}
\noindent\textbf{$\blacktriangleright$ Machine Unlearning (MU).} The field of Machine Unlearning (MU) has emerged in response to the growing need to selectively remove the influence of specific data from previously trained models \citep{shaik2024exploring,xu2024machine,neel2021descent,sekhari2021remember}. 
Methodologies for exact MU, which offer provable guarantees of data removal, have been thoroughly explored, especially for linear and convex models \citep{guo2019certified,koh2017understanding,giordano2019swiss}. 
However, applying exact unlearning to deep learning models typically necessitates a complete retraining from scratch \citep{bourtoule2021machine}, a procedure that is computationally prohibitive and thus impractical for a majority of real-world applications. 
To enhance efficiency, an alternative approach, termed approximate MU \citep{xu2024machine,thudi2022unrolling}, has gained prominence. A multitude of gradient-based approximate unlearning techniques \citep{golatkar2020eternal,warnecke2021machine,jia2023model,graves2021amnesiac,thudi2022unrolling,chundawat2023can,cheng2024remaining,fan2024salun,huang2024unified} have been introduced, which utilize specialized loss functions to guide the model in erasing information related to designated data, consequently mitigating privacy vulnerabilities like membership inference attacks \citep{hu2022membership,li2021membership,song2019privacy,carlini2022membership}  and data reconstruction attacks \citep{fredrikson2015model,balle2022reconstructing}. A significant challenge, akin to catastrophic forgetting in continual learning \citep{mccloskey1989catastrophic,wang2024comprehensive,ratcliff1990connectionist}, arises when approximate methods focus excessively on the forget set, often causing a substantial decline in model performance on the remaining data. 
While various strategies \citep{tarun2023fast,kurmanji2023towards,fan2024salun,huang2024unified}, notably fine-tuning on the retain set, have been introduced to preserve model utility, we assert that a critical real-world condition has been overlooked. 
\textit{Real-world unlearning data often follows long-tailed distributions, yet existing methods remain unexplored in long-tailed MU regimes.} Our FaLW fills this gap, enhancing long-tailed approximate unlearning performance.

\noindent\textbf{$\blacktriangleright$ Long-tailed Learning (LTL).} Existing LTL methods can be broadly classified into re-sampling and re-weighting, transfer learning and knowledge distillation, and multi-expert or modular designs. 
Specifically, re-sampling\citep{chawla2002smote,he2008adasyn} and re-weighting\citep{cui2019class,cao2019learning} techniques balance the data distribution or the learning process by adjusting sample probabilities or loss weights. 
Transfer learning\citep{yin2019feature,liu2019large} and knowledge distillation\citep{xiang2020learning,he2021distilling} methods attempt to transfer knowledge from head classes to tail classes or leverage knowledge from large pre-trained models. 
However, the long-tailed problem in MU is distinct from traditional LTL. 
The distinction lies in the objective: while traditional methods aim to improve performance on tail classes, long-tailed unlearning must effectively remove information from an imbalanced set while simultaneously accounting for the unlearning performance changes that long-tailed data induce on both head and tail classes. 
Therefore, developing imbalance-aware methods specifically tailored for the machine unlearning paradigm is urgently needed. 
In this paper, motivated by our findings in the long-tailed unlearning scenario, we design a forgetting-aware approximate unlearning method suitable for this challenging setting.

\section{Preliminaries}
\noindent\textbf{$\blacktriangleright$ Notation.} We consider a probability distribution $\mathbb{P}_\mathcal{X}$ over the input space $\mathcal{X}$ and a set of $C$ classes denoted by $\mathcal{Y} = \{1, 2, \dots, C\}$. We focus on a multi-class classifier $\mathcal{F}: \mathcal{X} \to \mathcal{Y}$. For a given input $x \in \mathcal{X}$, the model's prediction function, $f(x)$, outputs a vector of predicted probabilities $p(\cdot|x) \in \mathbb{R}^C$ over the classes. The loss function for the model $\mathcal{F}$ is denoted by $\ell_{\mathcal{F}}: \mathcal{X} \times \mathcal{Y} \to \mathbb{R}_+$, which quantifies the discrepancy between the predicted probabilities and the ground-truth label $y$. In the supervised learning setting, we are given a dataset $\mathcal{D} = \{(x_i, y_i)\}_{i=1}^N$, consisting of $\mathcal{N}$ labeled samples where each input $x_i$ is drawn from $\mathbb{P}_{\mathcal{X}}$ and $y_i \in \mathcal{Y}$ is its corresponding label. The model $\mathcal{F}$, parameterized by $\boldsymbol{\theta}$, is trained on $\mathcal{D}$ by minimizing the empirical risk, often expressed as $\mathbb{E}_{(x,y) \in \mathcal{D}}[\ell_{\mathcal{F}}(x, y)]$. The resulting set of parameters from training is denoted by $\boldsymbol{\theta}_o$.

\noindent\textbf{$\blacktriangleright$ Machine Unlearning.} Let $\boldsymbol{\theta}_o$ denote the parameters of the initial model trained on the entire dataset $\mathcal{D}$. Given a subset $\mathcal{D}_f \subset \mathcal{D}$ to be forgotten, its complement, the retain set, is defined as $\mathcal{D}_r = \mathcal{D} \setminus \mathcal{D}_f$. The most straightforward unlearning method is retraining from scratch, which trains a model exclusively on the retain set $\mathcal{D}_r$ to obtain the parameters $\boldsymbol{\theta_*} = \operatorname*{arg\,min}_{\boldsymbol{\theta}}{\mathbb{E}_{(x,y) \in \mathcal{D}_r}[\ell_{\mathcal{F}}(x, y)]}$. Typically, \textit{retraining} is here considered the gold standard for unlearning. However, \textit{retraining} is often computationally prohibitive. A practical approach is approximate unlearning, which applies an algorithm $\mathcal{M}$ to update the original parameters $\boldsymbol{\theta}_o$ by removing the influence of $\mathcal{D}_f$. The resulting unlearned model parameters are denoted by $\boldsymbol{\theta}_u = \mathcal{M}(\boldsymbol{\theta}_o, [\mathcal{D}_f, \mathcal{D}_r])$. The objective is to ensure that the behavior of the unlearned model with parameters $\boldsymbol{\theta}_u$ closely approximates that of the gold-standard retrained model with parameters $\boldsymbol{\theta}_*$.

\noindent\textbf{$\blacktriangleright$ Long-tailed Settings.} Let $\mathcal{N}_k$ denote the number of samples for class $k$. A long-tailed distribution refers to a highly imbalanced data distribution.
The sample distribution can often be modeled by a power law, such as $\mathcal{N}_k \propto k^{-\gamma}$, where the imbalance factor $\gamma$ controls the degree of class imbalance. For notational convenience, we use $\mathcal{D}_{f,k}$ and $\mathcal{D}_{r,k}$ to denote the subsets of samples belonging to class $k$ within the forget set $\mathcal{D}_f$ and the retain set $\mathcal{D}_r$, respectively. Their cardinalities are given by $N_{f,k} = |\mathcal{D}_{f,k}|$ and $\mathcal{N}_{r,k} = |\mathcal{D}_{r,k}|$. We define the long-tailed unlearning problem as the scenario where the forget set $\mathcal{D}_f$ itself exhibits a long-tailed distribution, satisfying $\mathcal{N}_{f,1} \ge \mathcal{N}_{f,2} \ge \dots \ge \mathcal{N}_{f,C}$ and $\mathcal{N}_{f,1} \gg \mathcal{N}_{f,C}$. Following standard practice in long-tailed learning, we partition the classes into head, medium, and tail groups based on the number of samples per class in the forget set, such that the total number of samples in each group satisfies $\mathcal{N}_{f,\text{head}} \gg \mathcal{N}_{f,\text{medium}} \gg \mathcal{N}_{f,\text{tail}}$.

\begin{figure}[t]
   \centering
   \includegraphics[width=1\columnwidth]{./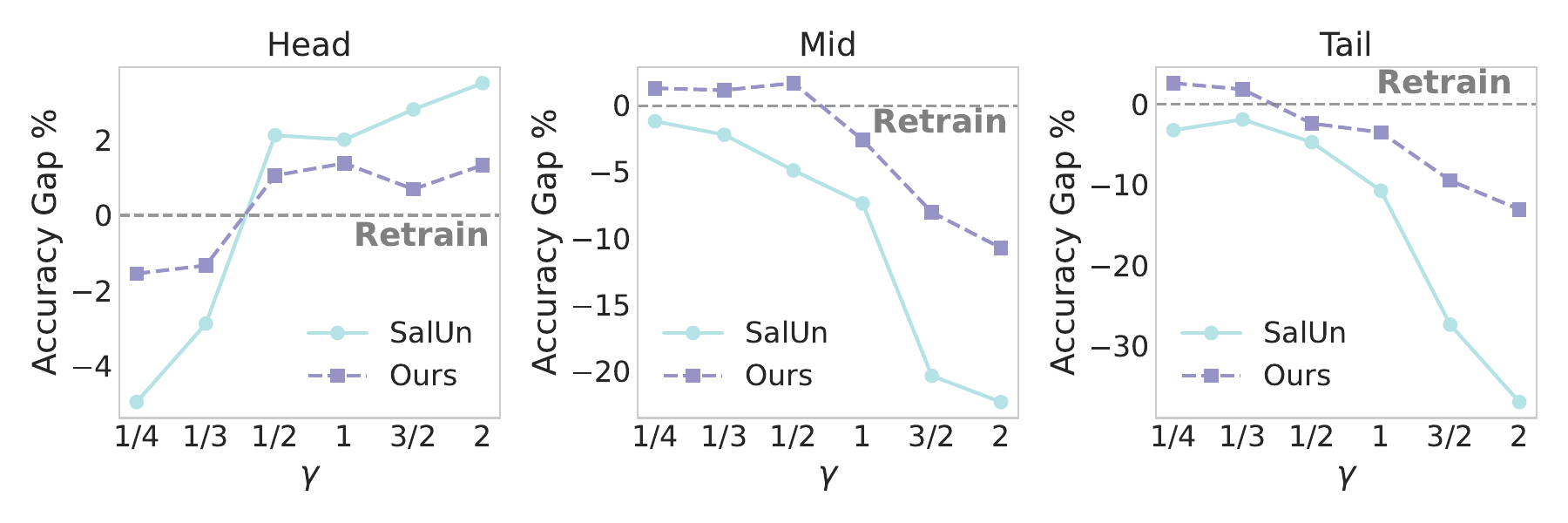} 
   \caption{Under the setup of the \textbf{ResNet-18} architecture trained on \textbf{CIFAR-100} dataset with a \textbf{30\%} unlearning ratio and a forget set distribution defined by $ \mathcal{N}_k \propto \frac{1}{k^{\gamma}} $, this figure visualizes forgetting accuracy gaps between Salun, our method (Ours), and the Retrain baseline across head, mid, and tail classes within the forget set.}\vspace{-0.3cm}
   \label{fig:Observation_LT}
\end{figure}

\section{Motivation}
\begin{definition}[Unlearning Deviation]
   \label{def:unlearning_deviation}
   For any sample $(x_i, y_i) \in \mathcal{D}_f$ and a small positive threshold $\tau_i$, we categorize the unlearning outcome for $x_i$ by comparing the output probability of the approximate model $\boldsymbol{\theta}_u$ with that of the retrained model $\boldsymbol{\theta}_*$ on the ground-truth class $c=y_i$. The outcome is classified as:
   \begin{equation}
   \begin{cases}
   \text{\ding{182} Under-forgetting,} & \text{if } p_{\boldsymbol{\theta}_u}(c|x_i) > p_{\boldsymbol{\theta}_*}(c|x_i) + \tau_i \\
   \text{\ding{183} Faithful-forgetting,} & \text{if } |p_{\boldsymbol{\theta}_u}(c|x_i) - p_{\boldsymbol{\theta}_*}(c|x_i)| \le \tau_i \\
   \text{\ding{184} Over-forgetting,} & \text{if } p_{\boldsymbol{\theta}_u}(c|x_i) < p_{\boldsymbol{\theta}_*}(c|x_i) - \tau_i
   \end{cases}
   \end{equation}
\end{definition}
Here, $p_{\boldsymbol{\theta}}(c|x_i)$ denotes the probability assigned to the true class $c$ for input $x_i$ by a model with parameters $\boldsymbol{\theta}$. The conditions of \textit{under-forgetting} and \textit{over-forgetting} are collectively termed \textbf{\textit{Unlearning Deviation}}.

\begin{formulation}[Gradient-based Approximate Unlearning]
   Briefly, gradient-based approximate unlearning methods can be formally expressed in three parts:
   \label{form:optimization_AP}
   \begin{equation}
       \begin{aligned}
           \min_{\boldsymbol{\theta}_u}  \ell_{\text{unlearn}}(\boldsymbol{\theta}_u) = \underbrace{\alpha \cdot \mathcal{L}(\mathcal{D}_f; \boldsymbol{\theta}_u)}_{\textbf{\ding{182} Forgetting Execution}} + \underbrace{\beta \cdot \mathcal{L}(\mathcal{D}_r; \boldsymbol{\theta}_u)}_{\textbf{\ding{183} Capacity Preservation}} + \underbrace{\lambda \cdot \mathcal{R}(\boldsymbol{\theta}_u, \boldsymbol{\theta}_o)}_{\textbf{\ding{184} Parameter Constraints}}
       \end{aligned}
   \end{equation}
\end{formulation}

In Formulation~\ref{form:optimization_AP},
\ding{182} \textit{\textbf{Forgetting Execution}} involves computing the inverse gradient derived from $\mathcal{D}_f$, which is essential for implementing the model's forgetting process. 
\ding{183} \textit{\textbf{Capacity Preservation}} leverages $\mathcal{D}_r$ to preserve the model's predictive accuracy on the remaining data. This is achieved by either constraining the inverse gradient (e.g., via projection methods) or introducing a forward gradient during optimization.
\ding{184} \textit{\textbf{Parameter Constraints}} term, $\mathcal{R}(\cdot)$, imposes constraints on the updated parameters $\boldsymbol{\theta}_u$, for instance, by penalizing large deviations from the original parameters $\boldsymbol{\theta}_o$ or by enforcing properties like L1-sparsity. The coefficients $\alpha, \beta,$ and $\lambda$ are non-negative hyperparameters that balance the trade-offs between these competing terms.

\begin{proposition}
   \label{prop:overforgetting}
   When an approximate unlearning method has access only to the forget set $\mathcal{D}_f$, it is prone to severe over-forgetting. Formally, for an unlearned model $\boldsymbol{\theta}_u = \mathcal{M}(\boldsymbol{\theta}_o, \mathcal{D}_f)$, the condition for over-forgetting, $p_{\boldsymbol{\theta}_u}(c|x_i) < p_{\boldsymbol{\theta}_*}(c|x_i) - \tau_i$, is frequently met for samples $(x_i, c) \in \mathcal{D}_f$.
\end{proposition}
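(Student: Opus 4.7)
The plan is to specialize Formulation~\ref{form:optimization_AP} by setting $\beta=0$ and $\lambda=0$, isolating the forgetting-execution term, and then exhibit the asymmetry between the unlearned probability $p_{\boldsymbol{\theta}_u}(c|x_i)$ driven by this single term and the retrained probability $p_{\boldsymbol{\theta}_*}(c|x_i)$ obtained from training on the retain set. The key idea is that the forgetting-execution loss on $\mathcal{D}_f$ has an unbounded monotone incentive to drive the true-class probability of every forget sample toward zero, whereas the retrained model is anchored by generalization from other within-class samples in $\mathcal{D}_r$ and therefore preserves non-trivial mass on class $c$.

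First, I would instantiate the generic forgetting loss as a monotone strictly increasing function of $p_{\boldsymbol{\theta}}(y_i|x_i)$, covering the standard choices used by gradient-ascent, negative cross-entropy, and random-labeling style objectives. With $\beta=\lambda=0$, the problem reduces to $\min_{\boldsymbol{\theta}_u}\alpha\sum_{(x_i,y_i)\in\mathcal{D}_f}\log p_{\boldsymbol{\theta}_u}(y_i|x_i)$, whose infimum is attained only in the limit $p_{\boldsymbol{\theta}_u}(y_i|x_i)\to 0$. Since modern overparameterized classifiers can realize arbitrarily small true-class probability on any finite $\mathcal{D}_f$, the optimization trajectory forces $p_{\boldsymbol{\theta}_u}(c|x_i)\to 0$ for every $(x_i,c)\in\mathcal{D}_f$ as the number of unlearning steps grows; a quantitative version would track the number of iterations required to fall below any prescribed threshold.

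Next, I would argue that the retrained model does not exhibit this collapse. Because $\mathcal{D}_r$ contains other labeled samples of class $c$ (which holds precisely in the long-tailed, non-class-removal regime considered in the paper), a standard generalization argument implies that $\boldsymbol{\theta}_*$ assigns at least a class-dependent positive baseline probability $\underline{p}_c$ to in-distribution inputs of class $c$; empirically this baseline is of the same order as the retrained model's per-class test confidence on unseen within-class data. For most $(x_i,c)\in\mathcal{D}_f$ we therefore have $p_{\boldsymbol{\theta}_*}(c|x_i)\ge \underline{p}_c \gg \tau_i$, with $\tau_i$ the small deviation threshold of Definition~\ref{def:unlearning_deviation}. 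Combining the two estimates yields $p_{\boldsymbol{\theta}_u}(c|x_i)<\underline{p}_c-\tau_i\le p_{\boldsymbol{\theta}_*}(c|x_i)-\tau_i$ after sufficiently many unlearning steps, matching the over-forgetting criterion for a large fraction of forget samples.

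The main obstacle I expect is making the generalization step rigorous. A uniform lower bound on $p_{\boldsymbol{\theta}_*}(c|x_i)$ over $i$ requires either an assumption on the hypothesis class (uniform convergence, PAC-Bayes, or a smoothness condition on $f_{\boldsymbol{\theta}_*}$ around within-class examples) or, more pragmatically, an empirical instantiation in which $\underline{p}_c$ is identified with the mean confidence of $\boldsymbol{\theta}_*$ on unseen class-$c$ data. The latter identification is exactly the quantity FaLW uses to estimate per-sample unlearning deviation, so the proposed proof dovetails with the paper's algorithmic contribution; I would supplement the argument with an empirical curve showing $p_{\boldsymbol{\theta}_u}(c|x_i)$ falling below the retrained baseline during training as direct evidence for the proposition.
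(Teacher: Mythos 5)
Your proposal is correct and follows essentially the same route as the paper's own argument: isolate the forget-set term (the paper instantiates it with Random Labeling and the softmax-logit gradient $\partial \mathcal{L}_{RL}/\partial z_c = p_c$, while you abstract to any loss monotone in $p_{\boldsymbol{\theta}}(c|x_i)$), show that its minimization drives the true-class probability toward zero with no natural stopping condition, and contrast this with the retrained model, which keeps non-trivial confidence on class $c$ because $\mathcal{D}_r$ still contains class-$c$ samples. Both arguments sit at the same informal level of rigor, so the generalization gap you flag is no weaker than what the paper's own proof provides.
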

\noindent The proof for Proposition~\ref{prop:overforgetting} is deferred to \textbf{Appendix \ref{appe:proof_prop_overforgetting}}. This proposition underscores that successful unlearning does not require the model's confidence on a forgotten sample to be very low and even driven to zero. When the retain set $\mathcal{D}_r$ is unavailable, the extent of unlearning cannot be effectively calibrated, and the process lacks a natural stopping point, leading to over-forgetting. Consequently, most approximate unlearning methods leverage both $\mathcal{D}_f$ and $\mathcal{D}_r$, utilizing the forward gradient on $\mathcal{D}_r$ (the capacity preservation term) to counteract the aggressive parameter updates from the forgetting term. To further investigate this interplay, we conduct experiments on SalUn \citep{fan2024salun}, which embodies all components of the objective in Formulation~\ref{form:optimization_AP}.

\begin{observation}
   \label{obs:salun_overforgetting}
   As illustrated in \textbf{Figure~\ref{fig:over_forget}(b)} and \textbf{Appendix \ref{appe:observation_salun_overforgetting}}, the predictive probabilities of the unlearned model $\boldsymbol{\theta}_u$ produced by SalUn are consistently lower than those of the retrained model $\boldsymbol{\theta}_*$ across various experimental settings. This indicates that despite leveraging the retain set for capacity preservation, the SalUn method still exhibits a discernible degree of over-forgetting.
\end{observation}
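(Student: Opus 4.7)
The plan is to support Observation~\ref{obs:salun_overforgetting} empirically, treating it as a claim about a systematic statistical gap rather than a closed-form identity. First I would fix a diverse experimental suite: at minimum the (ResNet-18, CIFAR-10/100) pairing used in Figure~\ref{fig:over_forget}(b), plus additional architectures and unlearning ratios (e.g.\ $10\%, 20\%, 30\%$), and forget sets generated by random, balanced, and long-tailed sampling (to span the regimes emphasized in the paper). For each configuration I would produce a matched pair of models: the retrained gold standard $\boldsymbol{\theta}_*$ obtained by minimizing $\mathbb{E}_{(x,y)\in\mathcal{D}_r}[\ell_\mathcal{F}(x,y)]$ from scratch, and the SalUn-unlearned model $\boldsymbol{\theta}_u$ obtained by running the full SalUn objective corresponding to Formulation~\ref{form:optimization_AP}. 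Seeds and hyperparameters would be controlled so that any gap is attributable to the unlearning algorithm rather than to optimization noise.

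Next, for every $(x_i,c)\in\mathcal{D}_f$ I would record the pair $(p_{\boldsymbol{\theta}_u}(c|x_i),\,p_{\boldsymbol{\theta}_*}(c|x_i))$ and compare them in two complementary ways. Visually, I would overlay their histograms (as in Figure~\ref{fig:over_forget}(b)) and plot quantile-quantile curves to show that the SalUn distribution is first-order stochastically dominated by the retrain distribution. Statistically, I would report: (i) the mean signed gap $\Delta_i := p_{\boldsymbol{\theta}_*}(c|x_i) - p_{\boldsymbol{\theta}_u}(c|x_i)$ with confidence intervals across seeds; (ii) the fraction of samples classified as \emph{over-forgotten} under Definition~\ref{def:unlearning_deviation} for a range of thresholds $\tau_i$; and (iii) a one-sided paired Wilcoxon test confirming $\Delta_i > 0$ with high significance. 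Consistency of the sign of $\Delta_i$ across all configurations is what elevates the phenomenon from anecdote to observation.

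Finally, I would tie the empirical finding back to Proposition~\ref{prop:overforgetting}. That proposition already establishes that the forgetting-execution term alone drives $p_{\boldsymbol{\theta}_u}(c|x_i)$ below $p_{\boldsymbol{\theta}_*}(c|x_i)$; the role of Observation~\ref{obs:salun_overforgetting} is to show that the capacity-preservation term $\beta\cdot\mathcal{L}(\mathcal{D}_r;\boldsymbol{\theta}_u)$ and the parameter-constraint term $\lambda\cdot\mathcal{R}(\boldsymbol{\theta}_u,\boldsymbol{\theta}_o)$ in Formulation~\ref{form:optimization_AP} do not fully compensate. To make this compensation gap visible, I would additionally sweep SalUn's $\alpha$ and $\beta$ coefficients and show that increasing $\beta$ shrinks but does not eliminate the deviation, ruling out the alternative explanation that the effect is merely a matter of undertuned hyperparameters.

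The main obstacle will not be running the experiments but ensuring the comparison is \emph{fair and conclusive}. The retrain baseline is stochastic, so $p_{\boldsymbol{\theta}_*}(c|x_i)$ is itself a random variable across seeds; the observed gap must be shown to exceed this intrinsic retrain-to-retrain variability. I would therefore train multiple independent retrain references per configuration and verify that the SalUn-vs-retrain gap dominates the retrain-vs-retrain gap with high probability, which turns a qualitative picture into a defensible empirical claim.
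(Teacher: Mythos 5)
Your plan is essentially the paper's own justification: the observation is supported purely empirically, by comparing per-sample predictive probabilities of the SalUn-unlearned model against the retrained model on the forget set and showing the SalUn distribution sits consistently lower across multiple settings (Figure~\ref{fig:over_forget}(b) and the additional plots in Appendix~\ref{appe:observation_salun_overforgetting}), with Proposition~\ref{prop:overforgetting} providing the motivating mechanism. Your additions (quantile plots, paired Wilcoxon tests, $\beta$-sweeps, and multiple retrain references to bound seed-to-seed variability) go beyond what the paper reports but do not change the approach.
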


\noindent\textbf{$\blacktriangleright$ Random Sampling is more Balanced.} In many practical unlearning scenarios, the data designated for forgetting is inherently imbalanced. For instance, when a user requests the deletion of their account, a platform must ensure this user's data no longer influences its models (e.g., for recommendation or advertising). An individual's data, shaped by personal interests, is often highly skewed and concentrated within a specific subset of categories. However, prevailing research has largely simulated forget requests by randomly sampling from the training data. While random sampling could theoretically yield an imbalanced forget set, our empirical results indicate it consistently produces a class distribution that is far more balanced than a long-tailed one. Specifically, as shown in Figure~\ref{fig:over_forget}(a), repeatedly sampling 20\% of the CIFAR-100 dataset results in distributions that closely approximate a balanced one, starkly contrasting with the characteristics of a long-tailed distribution. This discrepancy highlights a critical gap: the behavior of approximate unlearning methods under the more realistic long-tailed forgetting scenario remains insufficiently explored.

\noindent\textbf{$\blacktriangleright$ When Unlearning Meets Imbalance.} To address this gap, we conduct extensive experiments on CIFAR-100, with details presented in Figure~\ref{fig:Observation_LT} and \textbf{Appendix \ref{appe:observation_long_tail_unlearning_traits}}. We construct long-tailed forget sets where the class distribution follows $\mathcal{N}_{f,k} \propto k^{-\gamma}$, with the exponent $\gamma$ controlling the severity of the imbalance. As we increase $\gamma$, we observe a critical trend: the unlearning outcome for head-class samples gradually shifts from over-forgetting to under-forgetting, while the under-forgetting for medium- and tail-class samples is exacerbated. We summarize these findings as follows:
\begin{observation}
\label{obs:long_tail_unlearning_traits}
Under a long-tailed forget distribution, approximate unlearning methods exhibit two key essences:
\begin{enumerate}[left=1.5em]
\item[\ding{182}] \textbf{Heterogeneous Unlearning Deviation:} The model demonstrates disparate unlearning patterns across classes, typically under-forgetting samples from head classes while over-forgetting those from tail classes.
\item[\ding{183}] \textbf{Skewed Unlearning Deviation:} The magnitude of the unlearning deviation is disproportionately larger for tail-class data compared to head- and medium-class data.
\end{enumerate}
\end{observation}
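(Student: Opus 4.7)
The plan is to establish this observation empirically, while backing it with a gradient-level intuition drawn from Formulation~\ref{form:optimization_AP}. I would fix a representative approximate unlearner (e.g.\ SalUn) that instantiates all three terms of Formulation~\ref{form:optimization_AP}, train an original model $\boldsymbol{\theta}_o$ and a gold-standard retrain $\boldsymbol{\theta}_*$ on CIFAR-100, and then generate a family of forget sets with $\mathcal{N}_{f,k}\propto k^{-\gamma}$ for a grid of $\gamma$. For every $\gamma$, I would compute per class $k$ the signed probability gap $\Delta_k = \mathbb{E}_{(x,c)\in\mathcal{D}_{f,k}}\bigl[p_{\boldsymbol{\theta}_u}(c\mid x)-p_{\boldsymbol{\theta}_*}(c\mid x)\bigr]$ and aggregate classes into head, medium, and tail buckets using the partition defined in Section~3. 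The qualitative claims are then read off from how $\Delta_k$ and $|\Delta_k|$ vary with $k$ and $\gamma$.

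For claim \ding{182} (\emph{Heterogeneous Unlearning Deviation}), I would show that $\Delta_{\text{head}}>\tau$ (under-forgetting) while $\Delta_{\text{tail}}<-\tau$ (over-forgetting) for a reasonable threshold, and that the sign of $\Delta_{\text{head}}$ crosses from negative to positive as $\gamma$ grows, matching the trend in Figure~\ref{fig:Observation_LT}. The supporting gradient-level argument decomposes $\nabla_{\boldsymbol{\theta}}\ell_{\text{unlearn}}$ along the class-$k$ logit direction: the forgetting term contributes a net pull of order $\alpha\mathcal{N}_{f,k}$, while the capacity-preservation term contributes $\beta\mathcal{N}_{r,k}$ with opposite sign. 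For a head class with large $\mathcal{N}_{f,k}$ the preservation term is comparably large and often dominant, yielding under-forgetting; for a tail class $\mathcal{N}_{r,k}$ is too small to act as a meaningful anchor, and Proposition~\ref{prop:overforgetting} already shows that the unchecked forgetting term drives over-forgetting by default.

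For claim \ding{183} (\emph{Skewed Unlearning Deviation}), I would plot $|\Delta_k|$ as a function of class rank and check monotone growth toward the tail. A short justification is a first-order Taylor expansion of $\ell_{\text{unlearn}}$ around $\boldsymbol{\theta}_*$, which gives an approximate equilibrium gap of the form $\Delta_k \approx (\alpha\mathcal{N}_{f,k} - \beta\mathcal{N}_{r,k})/(\beta\mathcal{N}_{r,k} + \lambda)$. Since $\mathcal{N}_{r,k}$ shrinks toward the tail, the denominator contracts and $|\Delta_k|$ inflates, reproducing the observed skew; the same expression also explains the sign flip above without any additional assumptions.

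The main obstacle is twofold. First, the threshold $\tau_i$ in Definition~\ref{def:unlearning_deviation} is per-sample and unspecified, so relying on a single cutoff could trivialize or overstate the claim. To avoid this I would report the full empirical distribution of $p_{\boldsymbol{\theta}_u}-p_{\boldsymbol{\theta}_*}$ per bucket and use the class-averaged signed gap for the qualitative statement, reserving any hard threshold for illustration. Second, isolating the effect from a SalUn-specific artifact requires repeating the study for several representative unlearners (e.g.\ GA, RL, $\ell_1$-sparse, SCRUB) across multiple backbones and $\gamma$ values, which is the robustness evidence I would push to the appendix alongside Figure~\ref{fig:Observation_LT}.
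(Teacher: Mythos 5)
Your empirical plan is essentially the paper's own route: Observation~\ref{obs:long_tail_unlearning_traits} is not proved formally but supported by experiments that sweep the imbalance exponent $\gamma$, bucket the forget classes into head/mid/tail, and compare the approximate unlearner against Retrain (Figure~\ref{fig:Observation_LT} and Appendix~\ref{appe:observation_long_tail_unlearning_traits}, Figure~\ref{fig:Observation_Skewed}, with robustness across settings). The only substantive methodological difference is the metric: the paper reads the deviation off per-bucket forgetting-accuracy gaps, whereas you use class-averaged signed probability gaps $\Delta_k$, which if anything hews closer to Definition~\ref{def:unlearning_deviation}; your handling of the unspecified threshold $\tau_i$ and the multi-method/multi-backbone robustness sweep are likewise in the spirit of what the appendix does.

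There is, however, a concrete flaw in your supporting gradient-level argument: it gets the class-wise retain counts backwards. In the paper's setting the long tail lives only in the forget set, while the underlying datasets (CIFAR-10/100, Tiny-ImageNet) are balanced, so $\mathcal{N}_{r,k}=\mathcal{N}_k-\mathcal{N}_{f,k}$ is \emph{largest} for forget-tail classes and smallest for forget-head classes. Consequently the premises ``for a tail class $\mathcal{N}_{r,k}$ is too small to act as a meaningful anchor'' and ``$\mathcal{N}_{r,k}$ shrinks toward the tail, so the denominator contracts'' are false, and your Taylor-equilibrium expression does not yield the claimed inflation of $|\Delta_k|$ toward the tail for the reason you give. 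The mechanism consistent with the paper's account is instead that the \emph{retrain target} is class-dependent: for tail classes almost nothing is removed, so $p_{\boldsymbol{\theta}_*}(c|x)$ stays high, whereas for head classes the retrained model has lost many class-$c$ samples and its target confidence drops; an unlearner applying roughly uniform forgetting pressure (cf.\ Proposition~\ref{prop:overforgetting}) therefore overshoots the high tail targets (over-forgetting, with large magnitude) and undershoots the lowered head targets (under-forgetting). Since the statement is an empirical observation, your experimental plan would still establish it, but the heuristic justification as written is misleading and should be corrected or dropped.
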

Existing approximate unlearning methods are predominantly designed with a holistic perspective, focusing on the aggregate forgetting effect. Consequently, there is a dearth of research on strategies to handle the differential unlearning behaviors that emerge across samples. This limitation becomes particularly acute when the forget set exhibits long-tailed characteristics. Therefore, an adaptive unlearning mechanism that can modulate the process at a finer, sample- or class-level granularity is urgently needed.

\begin{figure*}[!t]
   \centering 
   \includegraphics[width=1\textwidth]{./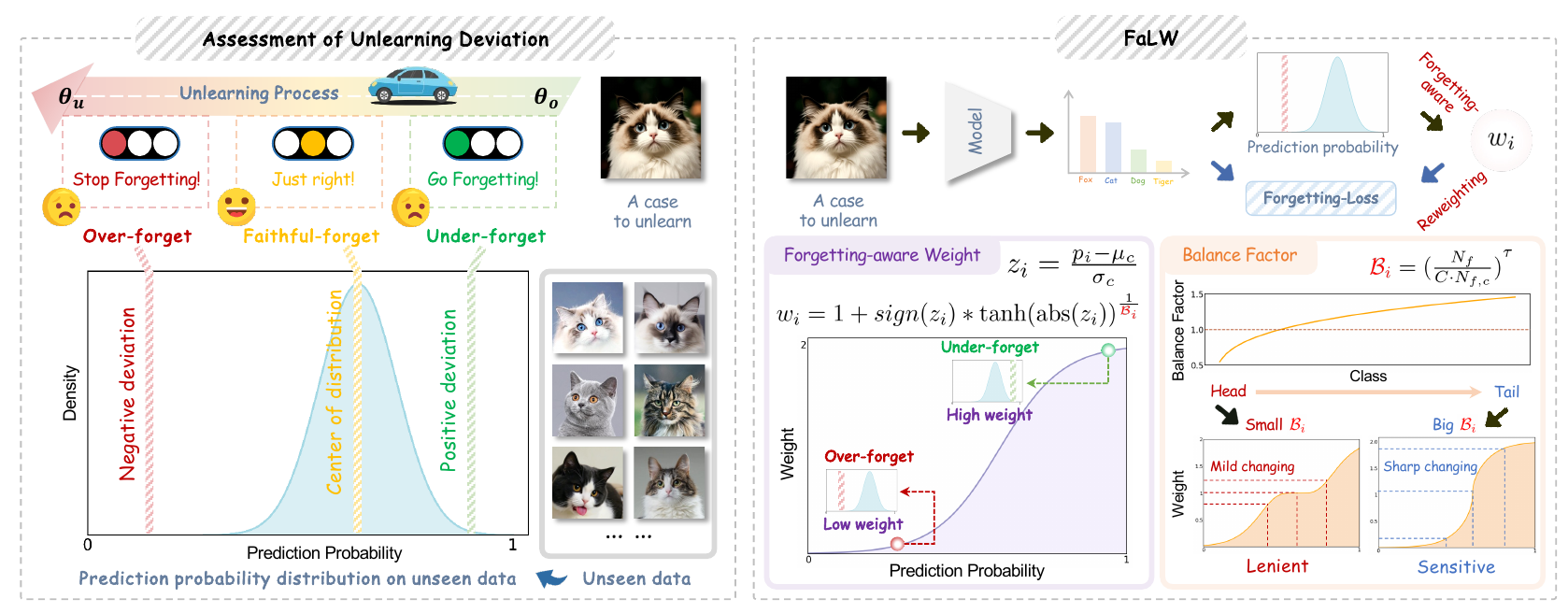} 
   \caption{The FaLW framework. \textbf{Left}: Analyzing instance-level unlearning bias using the model's predictive probability on unseen data. \textbf{Right}: The top shows instance-level loss reweighting, and the bottom explains the forgetting-aware weight and balance factor.}\vspace{-0.3cm}
   \label{fig:framework}
   
\end{figure*}

\section{FaLW: A Forgetting-aware Loss Reweighting}
Motivated by the challenges identified in Observation~\ref{obs:long_tail_unlearning_traits}, we propose an instance-wise loss re-weighting method. The overall architecture is depicted in Figure~\ref{fig:framework}. This approach dynamically adjusts the loss weight for each sample based on its estimated unlearning deviation during the unlearning process, thereby steering the model towards a more faithful unlearning state. Specifically, we introduce a dynamic weight $w_{i}$ for each sample in $D_r$ and adapt the general objective from Formulation~\ref{form:optimization_AP} as follows:
\begin{equation}
\label{eq:loss_reweighting}
\begin{aligned}
    \min_{\boldsymbol{\theta}_u} \ell_{\text{unlearn}}(\boldsymbol{\theta}_u) = \alpha\sum_{(x_i,y_i) \in \mathcal{D}_f} w_{i} \cdot \mathcal{L}((x_i,y_i); \boldsymbol{\theta}_u) + \beta \cdot \mathcal{L}(\mathcal{D}_r; \boldsymbol{\theta}_u) + \lambda \cdot \mathcal{R}(\boldsymbol{\theta}_u, \boldsymbol{\theta}_o)
\end{aligned}
\end{equation}

\subsection{Assessment of Unlearning Deviation}
\noindent\textit{\textbf{\underline{A Key Idea}: The core principle of unlearning is to transition a model's state from having been trained on a sample to a state equivalent to never having seen it, briefly as from ``seen" to ``unseen".}} Essentially, unlearning a sample aims to nullify its marginal contribution to the model's parameters, thereby reverting the model's predictive behavior on that sample to its naive state.
To formalize this, let's consider unlearning a single sample $(x_i, y_i=c)$. The objective is to update the model parameters from the original $\boldsymbol{\theta}_o$ to the unlearned $\boldsymbol{\theta}_u$ such that the model's confidence on the true class $c$ approaches a specific target: $p_{\boldsymbol{\theta}_u}(c|x_i) \to p_{\boldsymbol{\theta}_*}(c|x_i)$. Here, $p_{\boldsymbol{\theta}_*}(c|x_i)$ represents the ideal predictive probability on $x_i$ from a model that has never been exposed to this sample. A way to get this ideal model, parameterized by $\boldsymbol{\theta}_*$, is retraining from scratch on the dataset with that specific sample removed:
$\boldsymbol{\theta}_* = \operatorname*{arg\,min}_{\boldsymbol{\theta}}{\mathbb{E}_{(x,y) \in \mathcal{D} \setminus \{(x_i,y_i)\}}[\ell_{\mathcal{F}}(x, y)]}$.

\begin{proposition}
   \label{prop:confidence_increase}
   All else being equal, a model trained on a dataset including a sample $(x_i, y_i=c)$ will exhibit higher confidence on that sample's true class compared to a model trained on the same dataset excluding it. Formally, let $\boldsymbol{\theta}_o$ be the parameters of a model trained on $\mathcal{D}$, and let $\boldsymbol{\theta}_*$ be the parameters of a model trained on $\mathcal{D} \setminus \{(x_i, y_i)\}$. Then, it generally holds that:
   \begin{equation}
   p_{\boldsymbol{\theta}_o}(c|x_i) \ge p_{\boldsymbol{\theta}_*}(c|x_i)
   \end{equation}
\end{proposition}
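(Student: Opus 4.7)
The plan is to reduce the probability statement to a leave-one-out loss comparison and then invoke a classical influence-function perturbation argument. Let $L_\mathcal{D}(\boldsymbol{\theta}) = \sum_{(x,y)\in\mathcal{D}} \ell_\mathcal{F}(x,y;\boldsymbol{\theta})$ and $L_{-i}(\boldsymbol{\theta}) = L_\mathcal{D}(\boldsymbol{\theta}) - \ell_\mathcal{F}(x_i,y_i;\boldsymbol{\theta})$, so that by definition $\boldsymbol{\theta}_o \in \arg\min L_\mathcal{D}$ and $\boldsymbol{\theta}_* \in \arg\min L_{-i}$. Under the standard cross-entropy choice $\ell_\mathcal{F}(x_i,c;\boldsymbol{\theta}) = -\log p_{\boldsymbol{\theta}}(c|x_i)$, monotonicity of $-\log(\cdot)$ makes the claim $p_{\boldsymbol{\theta}_o}(c|x_i) \ge p_{\boldsymbol{\theta}_*}(c|x_i)$ equivalent to the per-sample loss inequality $\ell_\mathcal{F}(x_i,c;\boldsymbol{\theta}_o) \le \ell_\mathcal{F}(x_i,c;\boldsymbol{\theta}_*)$, which is what I would actually prove.

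Next I would use first-order optimality to relate the two minimizers. Since $\nabla L_\mathcal{D}(\boldsymbol{\theta}_o) = 0$, subtracting the contribution of $(x_i,y_i)$ gives $\nabla L_{-i}(\boldsymbol{\theta}_o) = -\nabla_{\boldsymbol{\theta}} \ell_\mathcal{F}(x_i,y_i;\boldsymbol{\theta}_o)$, so $\boldsymbol{\theta}_o$ is generically not a critical point of $L_{-i}$. Assuming $L_{-i}$ is locally strongly convex near $\boldsymbol{\theta}_o$ with Hessian $H_{-i} \succ 0$, a single Newton step yields the textbook influence-function approximation $\boldsymbol{\theta}_* - \boldsymbol{\theta}_o \approx H_{-i}^{-1}\,\nabla_{\boldsymbol{\theta}} \ell_\mathcal{F}(x_i,y_i;\boldsymbol{\theta}_o)$. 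A first-order Taylor expansion of the per-sample loss around $\boldsymbol{\theta}_o$ then gives
\begin{equation}
\ell_\mathcal{F}(x_i,y_i;\boldsymbol{\theta}_*) - \ell_\mathcal{F}(x_i,y_i;\boldsymbol{\theta}_o) \;\approx\; \nabla_{\boldsymbol{\theta}} \ell_\mathcal{F}(x_i,y_i;\boldsymbol{\theta}_o)^{\top} H_{-i}^{-1}\,\nabla_{\boldsymbol{\theta}} \ell_\mathcal{F}(x_i,y_i;\boldsymbol{\theta}_o) \;\ge\; 0,
\end{equation}
because $H_{-i}^{-1}$ inherits positive definiteness from $H_{-i}$. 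Combined with the cross-entropy monotonicity above, this delivers the stated inequality, with equality only in the degenerate case where the per-sample gradient vanishes at $\boldsymbol{\theta}_o$.

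The main obstacle is that this Newton-step argument is exact only under strict/strong convexity, which fails for deep networks: $L_{-i}$ is non-convex, the minimizers $\boldsymbol{\theta}_o$ and $\boldsymbol{\theta}_*$ are not unique, and the empirical Hessian is generally indefinite, so there is no hope of a fully general inequality. To justify the \emph{``generally holds''} qualifier in the proposition, I would offer three fallbacks that are standard in the machine-unlearning and influence-function literature: (i) restrict the analysis to a convex surrogate such as the softmax head over a frozen feature extractor, where the argument is rigorous; (ii) replace $H_{-i}$ by its Gauss--Newton or damped variant $H_{-i} + \mu I \succ 0$, under which the leading-order quadratic form is strictly positive \citep{koh2017understanding}; or (iii) interpret the inequality in expectation over the training randomness, so that the influence-function expression is the leading term of $\mathbb{E}[\ell_\mathcal{F}(x_i,y_i;\boldsymbol{\theta}_*) - \ell_\mathcal{F}(x_i,y_i;\boldsymbol{\theta}_o)]$. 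Any of these assumptions preserves the sign of the leading term and hence the claimed inequality, while also explaining why the gap $p_{\boldsymbol{\theta}_o}(c|x_i) - p_{\boldsymbol{\theta}_*}(c|x_i)$ serves as a meaningful, instance-level proxy for how much a sample was ``memorized'' by training, which is precisely the signal FaLW exploits downstream.
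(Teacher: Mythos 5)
Your reduction to a per-sample loss comparison via monotonicity of $-\log(\cdot)$ is exactly the paper's first and last step, but the middle of your argument takes a genuinely different, and weaker, route. The paper never linearizes: it plays the two optimality conditions off each other. Writing $\mathcal{L}_*(\boldsymbol{\theta})$ for the leave-one-out empirical risk and $\mathcal{L}_o(\boldsymbol{\theta})=\mathcal{L}_*(\boldsymbol{\theta})+\ell(\boldsymbol{\theta};x_i,c)$ for the full risk, minimality of $\boldsymbol{\theta}_o$ for $\mathcal{L}_o$ gives $\mathcal{L}_*(\boldsymbol{\theta}_o)+\ell(\boldsymbol{\theta}_o;x_i,c)\le \mathcal{L}_*(\boldsymbol{\theta}_*)+\ell(\boldsymbol{\theta}_*;x_i,c)$, and minimality of $\boldsymbol{\theta}_*$ for $\mathcal{L}_*$ gives $\mathcal{L}_*(\boldsymbol{\theta}_*)\le\mathcal{L}_*(\boldsymbol{\theta}_o)$; combining the two yields $\ell(\boldsymbol{\theta}_o;x_i,c)\le\ell(\boldsymbol{\theta}_*;x_i,c)$ exactly, with no convexity, no differentiability, no Hessian invertibility, and no Taylor remainder to control. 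Your influence-function route instead needs first-order optimality, local strong convexity (or a damped/Gauss--Newton surrogate), and a leading-order expansion whose higher-order terms you cannot sign, which is precisely why you are forced into the three fallback assumptions; as a proof of the stated inequality it is strictly more fragile, since the only idealization the paper's argument needs, namely that each model globally minimizes its own empirical risk, is one you must assume anyway just to define $\boldsymbol{\theta}_o$ and $\boldsymbol{\theta}_*$. What your route buys that the paper's does not is a quantitative handle on the gap: the self-influence term $\nabla_{\boldsymbol{\theta}}\ell(\boldsymbol{\theta}_o;x_i,c)^{\top}H_{-i}^{-1}\nabla_{\boldsymbol{\theta}}\ell(\boldsymbol{\theta}_o;x_i,c)$ estimates how much each sample's confidence is inflated, i.e.\ how memorized it is, which nicely motivates instance-wise reweighting but is not needed for the proposition itself. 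If you revise, state the exact value-comparison argument as the proof and present the influence-function computation as a quantitative corollary.
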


\noindent\textit{Remark.} The proof for Proposition~\ref{prop:confidence_increase} is provided in \textbf{Appendix \ref{appe:proof_prop_confidence_increase}}. This proposition implies that as long as knowledge of a sample $x_i$ is retained, the model's predictive confidence will be inflated relative to the ideal unlearned state. If we assume for a moment that the target probability $p_{\boldsymbol{\theta}_*}(c|x_i)$ is accessible, Proposition~\ref{prop:confidence_increase} inspires a conceptual model for an ideal unlearning trajectory. Before the process begins, with parameters $\boldsymbol{\theta} = \boldsymbol{\theta}_o$, the model's confidence is high: $p_{\boldsymbol{\theta}}(c|x_i) \ge p_{\boldsymbol{\theta}_*}(c|x_i)$. As unlearning unfolds, $p_{\boldsymbol{\theta}}(c|x_i)$ should monotonically decrease, eventually converging to the target $p_{\boldsymbol{\theta}_*}(c|x_i)$. At this point, the sample is successfully forgotten, and the process should terminate to prevent over-forgetting. Therefore, during approximate unlearning, the discrepancy between the current confidence $p_{\boldsymbol{\theta}_u}(c|x_i)$ and the target $p_{\boldsymbol{\theta}_*}(c|x_i)$ can serve as a metric to gauge the extent of forgetting for sample $x_i$, provided the target is known.

\noindent\textbf{$\blacktriangleright$ Approximating $\mathbf{p_{\boldsymbol{\theta}_*}(c|x_i)}$.} In practice, obtaining the exact target probability $p_{\boldsymbol{\theta}_*}(c|x_i)$ is intractable during approximate unlearning. We therefore approximating the deterministic target into a target distribution. Specifically, we postulate that for a forgotten sample $(x_i, y_i=c)$, its ideal predictive probability $p_{\boldsymbol{\theta}_*}(c|x_i)$ should be indistinguishable from the probabilities of other samples of class $c$ that the model has never seen. Let $\mathbb{P}'_c$ be a distribution of unseen samples of class $c$. The ideal unlearning process should result in $p_{\boldsymbol{\theta}_*}(c|x_i)$ being a plausible draw from the distribution $\{p_{\boldsymbol{\theta}_*}(c|x') | x' \sim \mathbb{P}'_c \}$. As previously argued, unlearning aims to revert a sample's status from ``seen" to ``unseen". This distribution of probabilities on genuinely unseen data for class $c$ represents the desired endpoint. Crucially, while the target probability for any single sample is elusive, this target distribution can be estimated during the unlearning process.

\noindent\textbf{$\blacktriangleright$ Assessment of Unlearning Deviation.} To operationalize this concept, we model the distribution of predictive probabilities on unseen data for a given class $c$ with a Gaussian, i.e., $p_{\boldsymbol{\theta}}(c|x') \sim \mathcal{N}(\mu_c, \sigma_c^2)$ for any unseen sample $x' \sim \mathbb{P}'_c$. The Unlearning Deviation for a specific sample $(x_i, y_i=c)$ can then be dynamically assessed by measuring how its current probability $p_{\boldsymbol{\theta}}(c|x_i)$ deviates from this target distribution. This dynamic assessment guides the unlearning trajectory, as depicted in Figure~\ref{fig:framework} left. Initially, at $\boldsymbol{\theta} = \boldsymbol{\theta}_o$, the model's confidence $p_{\boldsymbol{\theta}}(c|x_i)$ is significantly higher than $\mu_c$, positioning it as a positive outlier of the distribution. This signifies a state of \textit{under-forgetting}. As the unlearning process progresses, $p_{\boldsymbol{\theta}}(c|x_i)$ decreases. When its value falls within a high-density region of the distribution, we consider the sample faithfully forgotten, as its predictive signature is now typical of an unseen sample. At this stage, the unlearning pressure on this sample should be attenuated. If the process continues aggressively, $p_{\boldsymbol{\theta}}(c|x_i)$ may drop substantially below $\mu_c$, becoming a negative outlier and indicating a state of \textit{over-forgetting}.

\subsection{Forgetting-aware Weight}
Based on the assessment method above, we introduce our Forgetting-Aware Weight $w_{i}$ for each sample $(x_i, y_i=c) \in \mathcal{D}_f$. The weight is formulated as:
\begin{equation} 
\label{eq:reweighting_formula}
w_{i} = 1 + \operatorname{sign}(z_i) \cdot \left(\tanh(|z_i|)\right)^{\frac{1}{\eta}},\;\;\text{with} \; z_i = \frac{p_i - \mu_c}{\sigma_c}
\end{equation}
where $p_i = p_{\boldsymbol{\theta}}(c|x_i)$ is the model's current predictive probability on the sample. Here, $z_i$ is the standard z-score, quantifying how many standard deviations the current probability $p_i$ is from the mean $\mu_c$ of the target distribution for class $c$. The $\operatorname{sign}(\cdot)$ function determines the direction of adjustment, while the hyperbolic tangent, $\tanh(\cdot)$, maps the magnitude of the deviation into the range $(-1, 1)$. The hyperparameter $\eta > 0$ acts as a temperature, controlling the sensitivity of the reweighting scheme; a larger $\eta$ leads to a more pronounced response to the deviation, causing $w_{i}$ to change more sharply. In practice, the class-wise $\mu_c$ and $\sigma_c$ are estimated by running the current model on a held-out validation set and calculating the mean and standard deviation of predictions for each class.

\noindent\textit{Remark.} As depicted in the Figure~\ref{fig:framework} right, the reweighting scheme provides an intuitive, adaptive control mechanism. When a sample $x_i$ is over-forgotten, its confidence $p_i$ falls far below the mean $\mu_c$, resulting in a large negative z-score $z_i$. Consequently, $\operatorname{sign}(z_i)$ becomes -1 and $w_{i}$ approaches zero ($1-1=0$), effectively halting the forgetting pressure on this sample. Conversely, for an under-forgotten sample, $p_i$ is a positive outlier, yielding a large positive $z_i$ and a weight $w_{i}$ that approaches 2 ($1+1=2$), thereby intensifying the unlearning process. This adaptive mechanism is particularly crucial for addressing the Heterogeneous Unlearning Deviation observed in long-tailed settings, as it holistically mitigates the deviation for all forget samples by preventing both over- and under-forgetting.

\subsection{Balance Factor}
To counteract the Skewed Unlearning Deviation, where tail-class samples suffer from more severe deviation, we introduce a class-aware balance factor $\mathcal{B}_i$ to make the reweighting scheme more sensitive for minority classes. We want the weight adjustment to be more aggressive for tail-class samples; for a given deviation, their weights should change more rapidly than those of head-class samples. To this end, we define the balance factor as:
\begin{equation}
   \label{eq:balancing_factor}
   \mathcal{B}_i = \left(\frac{N_f}{C \cdot N_{f,k}}\right)^\tau
\end{equation}
where $c=y_i$, $\mathcal{N}_f = |\mathcal{D}_f|$ is the total number of samples in the forget set, $\mathcal{N}_{f,k}$ is the number of samples of class $c$ in the forget set, $C$ is the total number of classes, and $\tau$ is a non-negative hyperparameter. This factor is inversely proportional to the class frequency, yielding a larger value for tail-class samples. We then integrate this factor into our forgetting-aware weight by modulating the exponent:
\begin{equation}
   \label{eq:final_weight}
   w_{i} = 1 + \operatorname{sign}(z_i) \cdot \left(\tanh(|z_i|)\right)^{\frac{1}{\mathcal{B}_i}}
\end{equation}
With this modification, for a tail-class sample with a large $\mathcal{B}_i$. This causes weight $w_{i}$ to react more sensitively to z-score $z_i$, thus enabling a more rapid response to its unlearning deviation. By incorporating this balance factor, our final reweighting scheme can simultaneously perceive and mitigate both Heterogeneous \& Skewed Unlearning Deviations.

\noindent\textbf{$\blacktriangleright$ Conclusion.} Our method, FaLW, substantially mitigates the two types of unlearning deviation identified in Observation~\ref{obs:long_tail_unlearning_traits}. For the \textit{Heterogeneous Unlearning Deviation}, the forgetting-aware weight promotes the forgetting of under-forgotten samples by increasing their loss, while preventing further forgetting of over-forgotten samples by reducing their loss. For the \textit{Skewed Unlearning Deviation}, the balance factor adjusts the sensitivity of this process, applying a more aggressive deviation correction for tail-class samples.

\begin{table*}[t]
    \vspace{5pt}
    \centering
    \resizebox{\textwidth}{!}{%
    \begin{tabular}{l|cccc>{\columncolor{gray!10}}cc|cccc>{\columncolor{gray!10}}cc}
    \toprule
    & \multicolumn{6}{c|}{\textbf{Tiny-Imagenet}, \textbf{Vgg-16} , $\mathbf{\gamma=1}$} & \multicolumn{6}{c}{\textbf{CIFAR-10}, \textbf{Vgg-16} , $\mathbf{\gamma=1/2}$} \\
    \cmidrule(lr){2-13}
    \textbf{Method} & \multicolumn{6}{c|}{\textbf{Random Forget (10\%)}} & \multicolumn{6}{c}{\textbf{Random Forget (40\%)}} \\
    \cmidrule(lr){2-7} \cmidrule(lr){8-13}
     & \textbf{FA} & \textbf{RA} & \textbf{TA} & \textbf{MIA} & \textbf{Avg. Gap}&  {\textbf{std}} & \textbf{FA} & \textbf{RA} & \textbf{TA} & \textbf{MIA} & \textbf{Avg. Gap} &  {\textbf{std}} \\
    \midrule
    Retrain & 25.63 \textcolor{black}{(0.00)} & 99.98 \textcolor{black}{(0.00)} & 48.83 \textcolor{black}{(0.00)} & 74.36 \textcolor{black}{(0.00)} & 0.00 &  {-}  & 65.94 \textcolor{black}{(0.00)} & 100.00 \textcolor{black}{(0.00)} & 78.62 \textcolor{black}{(0.00)} & 34.06 \textcolor{black}{(0.00)} & 0.00 &  {-} \\
    \midrule 
    FT & 86.97 \textcolor{black}{(61.33)} & 99.98 \textcolor{black}{(0.00)}  & 53.95 \textcolor{black}{(5.12)} & 13.03 \textcolor{black}{(61.33)} & 31.95 &  {2.85}  & 68.42 \textcolor{black}{(2.48)} & 96.51 \textcolor{black}{(3.48)} & 78.37 \textcolor{black}{(0.25)} & 31.58 \textcolor{black}{(2.48)} & 2.18 &  {0.39} \\
    RL & 18.50 \textcolor{black}{(7.13)} & 99.83 \textcolor{black}{(0.15)}  & 48.51 \textcolor{black}{(0.32)} & 81.50 \textcolor{black}{(7.14)} & 3.69 &  {0.94}  & 72.37 \textcolor{black}{(6.43)} & 98.87 \textcolor{black}{(1.13)} & 81.15 \textcolor{black}{(2.53)} & 27.63 \textcolor{black}{(6.43)} & 4.13 &  {0.99} \\
    GA & 77.98 \textcolor{black}{(52.34)} & 93.73 \textcolor{black}{(6.26)}  & 48.67 \textcolor{black}{(0.16)} & 22.02 \textcolor{black}{(52.34)} & 27.77 &  {2.06} & 58.67 \textcolor{black}{(7.27)} & 84.28 \textcolor{black}{(15.71)} & 67.21 \textcolor{black}{(11.41)} & 41.33 \textcolor{black}{(7.27)} & 10.42 &  {3.09} \\
    IU & 96.29 \textcolor{black}{(70.66)} & 95.46 \textcolor{black}{(4.52)}  & 55.71 \textcolor{black}{(6.88)} & 3.71 \textcolor{black}{(70.65)} & 38.18 &  {0.18} & 98.84 \textcolor{black}{(32.90)} & 99.20 \textcolor{black}{(0.80)} & 90.47 \textcolor{black}{(11.85)} & 1.16 \textcolor{black}{(32.90)} & 19.61 &  {0.50} \\
    BE & 74.92 \textcolor{black}{(49.29)} & 92.61 \textcolor{black}{(7.37)}  & 45.89 \textcolor{black}{(2.94)} & 25.08 \textcolor{black}{(49.28)} & 27.22  &  {0.90} & 91.61 \textcolor{black}{(25.67)} & 98.26 \textcolor{black}{(1.74)} & 85.38 \textcolor{black}{(6.76)} & 8.39 \textcolor{black}{(25.67)} & 14.96 &  {1.36} \\
    BS & 72.60 \textcolor{black}{(46.97)} & 92.42 \textcolor{black}{(7.57)}  & 46.25 \textcolor{black}{(2.58)} & 27.40 \textcolor{black}{(46.96)} & 26.02 &  {0.83}  & 82.61 \textcolor{black}{(16.67)} & 95.93 \textcolor{black}{(4.06)} & 80.83 \textcolor{black}{(2.21)} & 17.39 \textcolor{black}{(16.67)} & 9.90 &  {1.25}  \\
    $l_1$-sparse & 96.33 \textcolor{black}{(70.70)} & 95.47 \textcolor{black}{(4.52)}  & 55.19 \textcolor{black}{(6.36)} & 3.67 \textcolor{black}{(70.69)} & 38.07 &  {0.77}  & 82.58 \textcolor{black}{(16.64)} & 95.44 \textcolor{black}{(4.55)} & 82.95 \textcolor{black}{(4.33)} & 17.42 \textcolor{black}{(16.64)} & 10.54 &  {1.71} \\
    SFRon & 22.01 \textcolor{black}{(3.62)} & 97.10 \textcolor{black}{(2.88)}  & 44.23 \textcolor{black}{(4.60)} & 77.98 \textcolor{black}{(3.62)} & 3.68 &  {0.76}  & 68.08 \textcolor{black}{(2.14)} & 97.10 \textcolor{black}{(2.90)} & 82.15 \textcolor{black}{(3.53)} & 30.91 \textcolor{black}{(3.15)} & 2.93 &  {1.02}  \\
    SalUn & 22.68 \textcolor{black}{(2.95)} & 99.18 \textcolor{black}{(0.80)}  & 46.11 \textcolor{black}{(2.72)} & 71.02 \textcolor{black}{(3.34)} & 2.45 &  {0.41} &  63.84 \textcolor{black}{(2.09)} & 96.04 \textcolor{black}{(3.96)} & 77.03 \textcolor{black}{(1.59)} & 33.16 \textcolor{black}{(0.91)} & 2.14 &  {0.15}  \\
    \midrule
    \rowcolor{gray!10} \textbf{FalW} & 25.70 \textcolor{black}{(0.07)} & 99.98 \textcolor{black}{(0.00)} & 49.11 \textcolor{black}{(0.28)} & 73.30 \textcolor{black}{(1.06)} & \underline{\textbf{0.35}} &  {0.20}  & 65.64 \textcolor{black}{(0.30)} & 99.99 \textcolor{black}{(0.00)} & 78.01 \textcolor{black}{(0.61)} & 33.36 \textcolor{black}{(0.70)} & \underline{\textbf{0.40}} &  {0.19} \\
    \bottomrule
    \end{tabular}%
    } 
    \caption{Results of comparative results with two setups: training \textbf{VGG-16} on \textbf{CIFAR-10} with \textbf{10\%} forgetting rate and $\mathbf{\gamma=1}$; training \textbf{VGG-16} on \textbf{Tiny-ImageNet} with \textbf{40\%} forgetting rate and $\mathbf{\gamma=1/2}$. ($\cdot$) indicates performance gaps between approximate unlearning methods and the Retrain method across metrics. "std" is the variance of Avg. Gap across multiple experiments. }
    \label{tab:Comparative_Results_Cifar100_corrected}
    \vspace{10pt}
 \end{table*}

\section{Experiments}
\subsection{Experiments Settings}
\noindent\textbf{$\blacktriangleright$ Datasets, Models, and Settings.} Our empirical evaluation is conducted on image classification unlearning tasks. We utilize ResNet-18 \citep{he2016deep} and VGG-16 \citep{simonyan2014very} as the primary model architectures. The experiments are performed on several standard benchmarks, including CIFAR-100 \citep{krizhevsky2009learning}, CIFAR-10 \citep{krizhevsky2009learning}, and Tiny-ImageNet \citep{le2015tiny}. We vary the size of the forget set from 10\% to 50\% of the total training data. To simulate imbalanced forget requests, we construct long-tailed forget sets where the class distribution follows a power law $\mathcal{N}_k \propto k^{-\gamma}$, where $\mathcal{N}_k$ is the number of samples in class $k$. We explore a wide range of imbalance factors by setting $\gamma \in \{0, 1/4, 1/3, 1/2, 1, 3/2, 2\}$, where $\gamma=0$ corresponds to the balanced case(random sampling closer to).

\noindent\textbf{$\blacktriangleright$ Baselines and Evaluation.} We compare our method against 9 baselines, including four gradient-based methods: Fine-Tuning (FT) \citep{warnecke2021machine}, Gradient Ascent (GA) \citep{thudi2022unrolling}, Random Label (RL) \citep{golatkar2020eternal}, SalUn \citep{fan2024salun} and SFRon \citep{huang2024unified}; two boundary-based methods \citep{chen2023boundary}: Boundary Expanding (BE) and Boundary Shrinking (BS); one influence-function-based method: Influence Unlearning (IU) \citep{izzo2021approximate}; and one parameter-sparsification method: L1-Sparse \citep{jia2023model}. The detailed algorithmic implementation of our proposed FaLW, as employed in the experiments, is provided in \textbf{Appendix \ref{appe:implementation}}. To comprehensively evaluate performance, we use five primary metrics: \ding{182} \textbf{Forgetting Accuracy (FA)}, the accuracy of the unlearned model $\boldsymbol{\theta}_u$ on the forget set $\mathcal{D}_f$; \ding{183} \textbf{Membership Inference Attack (MIA)}, a privacy measure evaluated on $\mathcal{D}_f$ to assess how effectively the model has forgotten the data; \ding{184} \textbf{Retain Accuracy (RA)}, the accuracy on the retain set $\mathcal{D}_r$, which measures the model's fidelity to the remaining data; \ding{185} \textbf{Test Accuracy (TA)}, the accuracy on the original test set, reflecting the model's overall generalization ability; \ding{186} \textbf{Avg. Gap}, the average of the above four metrics.

\subsection{Experiments Results}
\noindent\textbf{$\blacktriangleright$ Comparative Results.} We conducted extensive experiments across multiple datasets, forget ratios, and imbalance settings. Table~\ref{tab:Comparative_Results_Cifar100_corrected} presents the performance of FaLW against baselines, with two setups: VGG-16 trained on CIFAR-10 (10\% forgetting rate, $\gamma=1$) and VGG-16 trained on Tiny-ImageNet (40\% forgetting rate, $\gamma=1/2$). As shown, FaLW achieves a significantly lower average performance gap (Avg. Gap) compared to all other approximate unlearning baselines. Furthermore, FaLW demonstrates superior performance across nearly all individual metrics (FA, RA, TA, and MIA). \underline{\textbf{Additional results}} on Tiny-ImageNet and CIFAR-10, as well as under different forget ratios, different imbalance settings, and different model architectures, are available in \textbf{Appendix \ref{appe:comparative_results}}. 
\vspace{10pt}

\begin{wraptable}{r}{0.63\textwidth}
   
   \centering
   \resizebox{0.6\columnwidth}{!}{%
   \begin{tabular}{llcccc>{\columncolor{gray!10}}c}
   \toprule
   & \textbf{Method} & \textbf{FA} & \textbf{RA} & \textbf{TA} & \textbf{MIA} & \textbf{Avg. Gap} \\
   \midrule
   \multirow{3}{*}{$\gamma=0$} & Retrain & 59.11 & 99.95 & 59.74 & 40.89 & 0.00  \\
                               & SaLUn   & 56.93 & 99.96 & 60.60 & 44.06 & 1.55  \\
                               & FaLW    & 59.09 & 99.98 & 61.38 & 41.91 & \underline{\textbf{0.68}}  \\
   \cmidrule(lr){2-7}
   \multirow{3}{*}{$\gamma=\frac{1}{4}$} & Retrain & 57.13 & 99.96 & 59.24 & 42.87 & 0.00  \\
                                  & SaLUn   & 55.19 & 99.96 & 59.98 & 39.41 & 1.54  \\
                                  & FaLW    & 57.73 & 99.98 & 60.67 & 41.27 & \underline{\textbf{0.91}}  \\
   \cmidrule(lr){2-7}
   \multirow{3}{*}{$\gamma=\frac{1}{3}$} & Retrain & 56.29 & 99.97 & 59.33 & 43.71 & 0.00  \\
                                    & SaLUn   & 54.00 & 99.98 & 59.65 & 40.18 & 1.54  \\
                                    & FaLW    & 56.13 & 99.98 & 61.77 & 42.87 & \underline{\textbf{0.86}}  \\
   \cmidrule(lr){2-7}
   \multirow{3}{*}{$\gamma=\frac{1}{2}$} & Retrain & 50.36 & 99.95 & 57.71 & 49.64 & 0.00  \\
                                 & SaLUn   & 48.47 & 99.96 & 58.66 & 51.53 & 1.19  \\
                                 & FaLW    & 50.76 & 99.98 & 58.62 & 47.24 & \underline{\textbf{0.93}}  \\
   \cmidrule(lr){2-7}
   \multirow{3}{*}{$\gamma=1$} & Retrain & 19.30 & 99.96 & 50.70 & 80.70 & 0.00  \\
                               & SaLUn   & 21.75 & 99.97 & 53.95 & 78.25 & 2.04  \\
                               & FaLW    & 19.69 & 99.97 & 52.63 & 79.31 & \underline{\textbf{0.93}}  \\
   \cmidrule(lr){2-7}
   \multirow{3}{*}{$\gamma=\frac{3}{2}$} & Retrain & 4.43 & 99.95 & 48.13 & 95.57 & 0.00  \\
                                 & SaLUn   & 5.75 & 99.96 & 51.00 & 96.25 & 1.22  \\
                                 & FaLW    & 4.64 & 99.87 & 50.47 & 96.36 & \underline{\textbf{0.85}}  \\
   \cmidrule(lr){2-7}
   \multirow{3}{*}{$\gamma=2$} & Retrain & 1.82 & 99.94 & 47.12 & 98.17 & 0.00  \\
                               & SaLUn   & 4.79 & 99.96 & 50.32 & 95.20 & 2.29  \\
                               & FaLW    & 2.96 & 99.85 & 49.95 & 97.04 & \underline{\textbf{1.30}}  \\
   \bottomrule
   \end{tabular}%
   }
   \caption{Unlearning performance metrics of SalUn and FaLW with \textbf{ResNet-18} architecture trained on \textbf{CIFAR-100} by a \textbf{30\%} unlearning ratio, under varying imbalance levels of the forget set controlled by \( {\gamma} \) (larger \( \gamma \) denotes greater class imbalance in the forget set).     }
   \label{tab:various_imbalance_performance_results}
   \vspace{-0.3cm} 
\end{wraptable}
\noindent\textbf{$\blacktriangleright$ Results for Various Degrees of Imbalance.} To analyze the robustness under different imbalance scenarios, we conducted experiments on CIFAR-100 with a 30\% forget ratio while varying the imbalance factor $\gamma$ from 0 (balanced distribution) to 2 (highly long-tailed). The results are summarized in Table~\ref{tab:various_imbalance_performance_results}. Across all degrees of imbalance, FaLW consistently achieves a lower Avg. Gap than SalUn, indicating that its performance more closely approximates the Retrain gold standard. Furthermore, we observe a telling trend in SalUn's behavior: at low imbalance levels, its Forgetting Accuracy (FA) is lower than Retrain's, suggesting over-forgetting. As the imbalance becomes more severe, SalUn's FA surpasses that of Retrain, indicating a shift towards under-forgetting. In contrast, FaLW maintains FAs much closer to the Retrain baseline regardless of the imbalance settings. This demonstrates that FaLW effectively mitigates the heterogeneous unlearning deviation that plagues existing methods.


\begin{wraptable}{r}{0.5\textwidth}
   \centering
   \vspace{-0.3cm}
   \resizebox{0.5\columnwidth}{!}{%
   \begin{tabular}{cccccc}
   \toprule
   \textbf{$\boldsymbol{\gamma}$} & \textbf{Balance Factor} & \textbf{$\Delta$FA} & \textbf{$\Delta$Head FA} & \textbf{$\Delta$Mid FA} & \textbf{$\Delta$Tail FA} \\
   \midrule
   \multirow{2}{*}{1.5} & \ding{56} & 0.18 & 0.51 & -9.98 & -12.19 \\
                        & \ding{52} & 0.20 & 0.69 & -8.04 & -9.46  \\
   \midrule 
   \multirow{2}{*}{2}   & \ding{56} & 1.08 & 1.22 & -12.56 & -18.76 \\
                        & \ding{52} & 1.14 & 1.33 & -10.71 & -13.04 \\
   \bottomrule
   \end{tabular}
   }
   \caption{Forgetting Accuracy gaps ($\Delta$FA) vs. Retrain on \textbf{CIFAR-100} (\textbf{30\%} unlearning, \textbf{ResNet-18}) with a power-law forget set ($\mathcal{N}_k \propto k^{\mathbf{-1.5}}$ and $\mathcal{N}_k \propto k^{\mathbf{-2}}$). Gaps are shown for the overall set ($\Delta$FA) and head/mid/tail classes.}
   \label{tab:ablation_balance_factor}
   \vspace{-0.3cm}
\end{wraptable}
\noindent\textbf{$\blacktriangleright$ Ablation Study on Balance Factor.} We conduct an ablation study to analyze the benefits of Balance Factor. The experiments are performed on CIFAR-100 with ResNet-18 and a 30\% forget ratio, focusing on two highly skewed scenarios with IFs $\gamma=1.5$ and $\gamma=2.0$. In Table~\ref{tab:ablation_balance_factor}, the findings reveal a crucial trade-off: while the inclusion of the balancing factor can cause a minor decline in the forgetting accuracy for head-class data, it leads to a substantial improvement for tail-class data. This result confirms that the Balance Factor effectively fulfills its designed purpose—to specifically target and alleviate the skewed unlearning deviation.


\noindent\textbf{$\blacktriangleright$ Analysis of FaLW's Effectiveness.} 
Figure~\ref{fig:Observation_LT} validates FaLW's effectiveness in addressing unlearning deviation issues. 
Regarding \textit{Heterogeneous Unlearning Deviation}, FaLW consistently reduces the performance gap whether the baseline over-forgets or under-forgets, correcting deviations in both directions. 
For \textit{Skewed Unlearning Deviation}, it successfully mitigates the severe over-forgetting of tail-class samples that plagues the baseline in highly long-tailed settings.

\noindent\textbf{$\blacktriangleright$ Further Experiments.} We provide additional comparative experiments in \textbf{Appendix~\ref{appe:comparative_results}}, t-SNE visualizations in \textbf{Appendix~\ref{appe:tsne}}, a plug-and-play characteristic analysis in \textbf{Appendix~\ref{sec:appendix_plug_and_play}}, a hyperparameter sensitivity analysis for $\tau$ in \textbf{Appendix~\ref{sec:appendix_hyperparam}}, and an analysis on the choice of normal distribution in \textbf{Appendix~\ref{sec:appendix_distribution}}.

\section{Conclusion}
To address \textit{Heterogeneous} and \textit{Skewed Unlearning Deviation} in long-tailed unlearning, we propose \textbf{FaLW}, a plug-and-play dynamic loss reweighting that adaptively adjusts the unlearning intensity for each instance. Extensive experiments demonstrate that FaLW achieves SOTA performance.

\section{Reproducibility statement}
To ensure the reproducibility of our work, we are providing the code for our experiments as part of the supplementary material. We also provide a description of relevant implementation details in Appendix \ref{appe:implementation}.

\section{Ethics Statement}
Our work adheres to the ICLR Code of Ethics. This research did not involve human subjects or animal experimentation. All datasets used in this study are publicly available and were handled in strict accordance with their usage terms to protect privacy. We have taken proactive measures to mitigate potential biases in our methodology and evaluation, ensuring no personally identifiable information was processed and no discriminatory outcomes were generated. We are committed to maintaining transparency and integrity throughout our research process.

\section{Acknowledgements}
The authors gratefully acknowledge the support from the National Natural Science Foundation of China (NSFC) under Grant Nos. 62402472, and 12227901. This work was also supported by the Natural Science Foundation of Jiangsu Province (No. BK20240461), the Project of Stable Support for Youth Team in Basic Research Field, CAS (No. YSBR-005), and the Academic Leaders Cultivation Program at USTC. The AI-driven experiments, simulations and model training were performed on the robotic AI-Scientist platform of Chinese Academy of Sciences.

\bibliography{main}
\bibliographystyle{main}

\newpage 
\appendix

\begin{center}
   \Large{
      \textbf{
         \appendixname{
            \\FaLW: A Forgetting-aware Loss Reweighting for Long-tailed Unlearning
         }
      }
   }
\end{center}  

This appendix is organized as follows:
\begin{itemize}[leftmargin=1.5em]
   \item \textbf{Section~\ref{appe_sec:proof}: Proofs.} This section provides the theoretical proofs for Proposition~\ref{prop:overforgetting} and Proposition~\ref{prop:confidence_increase}.
   
   \item \textbf{Section~\ref{appe_sec:observation}: Observations.} This section presents additional experimental results for Observation~\ref{obs:salun_overforgetting} and Observation~\ref{obs:long_tail_unlearning_traits} to demonstrate the generality of our findings.
   
   \item \textbf{Section~\ref{appe_sec:experiments}: Experiments Complement.} This section details supplementary experiments, including Implementation Details, Additional Comparative Results, and a t-SNE Visualization Analysis.
   
   \item \textbf{Section~\ref{appe:limitations}: Limitations.} This section discusses the limitations of the proposed FaLW method and outlines potential directions for future work.
   
   \item \textbf{Section~\ref{appe:llm_usage}: LLM Usage Statement.} This section provides a statement on the use of Large Language Models (LLMs) in this research.

   \item \textbf{Section~\ref{sec:appendix_plug_and_play}: Plug-and-Play Characteristic Analysis of FaLW.} This section provides an analysis of the plug-and-play nature of the FaLW method, demonstrating its effectiveness when integrated with existing unlearning baselines.
   
   \item \textbf{Section~\ref{sec:appendix_hyperparam}: Hyperparameter Analysis for $\tau$.} This section conducts a sensitivity analysis for the hyperparameter $\tau$ to evaluate its impact on the overall unlearning performance.

   \item \textbf{Section~\ref{sec:appendix_distribution}: Clarification about use of the normal distribution.} This section justifies the choice of the Normal distribution for modeling predictive probabilities and provides a comparative experiment with a bounded Beta distribution variant.
\end{itemize}

\section{Proof}
\label{appe_sec:proof}
\subsection{Proof for Proposition \ref{prop:overforgetting}}
\label{appe:proof_prop_overforgetting}
\begin{proposition*}[Proposed in paper]
    When an approximate unlearning method has access only to the forget set $\mathcal{D}_f$, it is prone to severe over-forgetting. Formally, for an unlearned model $\boldsymbol{\theta}_u = \mathcal{M}(\boldsymbol{\theta}_o, \mathcal{D}_f)$, the condition for over-forgetting, $p_{\boldsymbol{\theta}_u}(c|x_i) < p_{\boldsymbol{\theta}_*}(c|x_i) - \tau_i$, is frequently met for samples $(x_i, c) \in \mathcal{D}_f$.
\end{proposition*}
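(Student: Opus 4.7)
The plan is to exploit the natural asymmetry between the dynamics of $\boldsymbol{\theta}_u$ (driven only by $\mathcal{D}_f$) and the behavior of the gold-standard retrained model $\boldsymbol{\theta}_*$ on the very same forget samples. First, I would specialize Formulation~\ref{form:optimization_AP} to the setting of the proposition: with $\mathcal{D}_r$ inaccessible, the capacity preservation term vanishes (equivalently $\beta=0$), so the unlearning update is governed solely by the forgetting execution term $\alpha\,\mathcal{L}(\mathcal{D}_f;\boldsymbol{\theta}_u)$ together with the parameter constraint $\lambda\,\mathcal{R}(\boldsymbol{\theta}_u,\boldsymbol{\theta}_o)$. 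I would then note that every standard instantiation of $\mathcal{L}(\mathcal{D}_f;\cdot)$ used in gradient-based unlearning (gradient ascent on cross-entropy, random-label targets, boundary-based losses) is monotone in the direction that decreases $p_{\boldsymbol{\theta}_u}(c|x_i)$ for each $(x_i,c)\in\mathcal{D}_f$. Consequently, along the optimization trajectory initialized at $\boldsymbol{\theta}_o$, the quantity $p_{\boldsymbol{\theta}_u}(c|x_i)$ is driven toward $0$, tempered only by the mild pull of $\mathcal{R}$ and with no counter-gradient from any retain signal.

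Next I would establish a non-vanishing lower bound for the reference probability $p_{\boldsymbol{\theta}_*}(c|x_i)$. The retrained model $\boldsymbol{\theta}_*$ is trained on a dataset that still contains many other samples of class $c$, and standard generalization (or, concretely, the empirical observation that a well-trained classifier attains high test accuracy) implies that for a typical $(x_i,c)\in\mathcal{D}_f$ one has $p_{\boldsymbol{\theta}_*}(c|x_i)\ge\delta$ for some $\delta$ comfortably above $1/C$. I would make this quantitative by treating $x_i$ as exchangeable with held-out samples of class $c$ once it has been excised from the training set of $\boldsymbol{\theta}_*$, which is precisely the perspective the paper adopts in Section~4 when approximating the target distribution from unseen data.

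Combining the two pieces completes the argument: for any $\tau_i<\delta$, after sufficiently many unlearning steps (or, in the closed-form Newton/IU regime, after a single update of sufficient magnitude), $p_{\boldsymbol{\theta}_u}(c|x_i)$ drops below $\delta-\tau_i\le p_{\boldsymbol{\theta}_*}(c|x_i)-\tau_i$, which is exactly the over-forgetting condition of Definition~\ref{def:unlearning_deviation}. The hardest part will be the second step: turning ``the retrained model generalizes'' into a rigorous lower bound on $p_{\boldsymbol{\theta}_*}(c|x_i)$ without committing to a specific model family or invoking a full PAC-style apparatus. I intend to sidestep this by phrasing $\delta$ in terms of an empirical per-class accuracy of $\boldsymbol{\theta}_*$, which is precisely the quantity the paper measures downstream; a secondary subtlety is ensuring that $\mathcal{R}$ does not prematurely arrest the descent of $p_{\boldsymbol{\theta}_u}(c|x_i)$, which can be handled by assuming $\lambda$ is small relative to $\alpha$ or by arguing that $\mathcal{R}$ is only a local penalty around $\boldsymbol{\theta}_o$ rather than a true barrier.
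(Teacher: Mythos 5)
Your proposal is correct and follows essentially the same route as the paper's own argument: the paper also shows that with only $\mathcal{D}_f$ the forgetting gradients (illustrated via Random Labeling, where $\partial\mathcal{L}_{RL}/\partial z_c = p_c$) monotonically drive $p_{\boldsymbol{\theta}_u}(c|x_i)$ toward zero with no stopping condition, while $p_{\boldsymbol{\theta}_*}(c|x_i)$ remains non-trivially positive because the retrained model retains class-$c$ knowledge from the remaining data. Your version is marginally more explicit (handling the $\mathcal{R}$ term and phrasing the lower bound via a $\delta$ tied to per-class accuracy), but the paper's proof is informal at exactly the same point you flag, so the approaches coincide.
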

\begin{proof}[Proof]
    When only $\mathcal{D}_f$ is available, approximate unlearning methods primarily update the model by applying gradients derived from this set, often designed to reverse the original training process. We illustrate this using the Random Labeling (RL) method as an example. In RL, for any sample $(x_i, y_i=c) \in \mathcal{D}_f$, its label is changed to an incorrect class $c' \neq c$. The model then performs gradient descent on a loss function computed with this incorrect label, which approximates applying an "unlearning" gradient. The standard cross-entropy loss for this relabeled sample is:
    \begin{equation}
        \mathcal{L}_{RL}(\boldsymbol{z}) = -\log(p_{c'})
    \end{equation}
    where $\boldsymbol{z} \in \mathbb{R}^C$ are the logit outputs of the model for sample $x_i$, and $p_j = e^{z_j} / \sum_{k=1}^{C}e^{z_k}$ is the softmax probability for class $j$. The gradients of this loss with respect to the logits of the true class $c$ and the incorrect class $c'$ are:
    \begin{equation}
    \label{eq:rl_gradients}
    \frac{\partial{\mathcal{L}_{RL}}}{\partial z_{j}} = 
    \begin{cases}
        p_c, & \text{if } j = c \\
        p_{c'} - 1, & \text{if } j = c'
    \end{cases}
    \end{equation}
    From Eq.~\ref{eq:rl_gradients}, we can see that the unlearning process, driven by gradient descent on $\mathcal{L}_{RL}$, continuously pushes the probability of the true class, $p_c$, towards zero. The magnitude of this gradient, $p_c$, is larger when the model is more confident, accelerating the process. This dynamic, however, is misguided. Forgetting a sample is not equivalent to demanding that the model's confidence on it drops to zero. A model that has faithfully forgotten $x_i$ should still retain knowledge of class $c$ from the (now absent) retain set, granting it a general capability to recognize class $c$. Even after unlearning, the model is expected to have some non-zero predictive capability for $x_i$ based on this general knowledge. Therefore, when only $\mathcal{D}_f$ is available, the unlearning process lacks a proper stopping condition and is prone to over-forgetting. This conclusion holds for other gradient-based methods as well, as they differ mainly in how the "unlearning" gradient is computed, not in the gradient update process.
\end{proof}

\subsection{Proof for Proposition \ref{prop:confidence_increase}}
\label{appe:proof_prop_confidence_increase}
\begin{proposition*}[Proposed in paper]
    All else being equal, a model trained on a dataset including a sample $(x_i, y_i=c)$ will exhibit higher confidence on that sample's true class compared to a model trained on the same dataset excluding it. Formally, let $\boldsymbol{\theta}_o$ be the parameters of a model trained on $\mathcal{D}$, and let $\boldsymbol{\theta}_*$ be the parameters of a model trained on $\mathcal{D} \setminus \{(x_i, y_i)\}$. Then, it generally holds that:
    \begin{equation}
    p_{\boldsymbol{\theta}_o}(c|x_i) \ge p_{\boldsymbol{\theta}_*}(c|x_i)
    \end{equation}
\end{proposition*}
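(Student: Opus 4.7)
The plan is to derive the result directly from the optimality conditions of the two training objectives, with cross-entropy serving as the bridge between loss values and predictive probabilities. Write the empirical loss over the reduced dataset as $L_*(\boldsymbol{\theta}) = \sum_{(x,y) \in \mathcal{D} \setminus \{(x_i,y_i)\}} \ell_{\mathcal{F}}(x,y;\boldsymbol{\theta})$ and the full loss as $L(\boldsymbol{\theta}) = L_*(\boldsymbol{\theta}) + \ell_{\mathcal{F}}(x_i,c;\boldsymbol{\theta})$. By definition, $\boldsymbol{\theta}_o \in \arg\min L$ and $\boldsymbol{\theta}_* \in \arg\min L_*$, so we get the two inequalities $L(\boldsymbol{\theta}_o) \le L(\boldsymbol{\theta}_*)$ and $L_*(\boldsymbol{\theta}_*) \le L_*(\boldsymbol{\theta}_o)$. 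Subtracting the second from the first cancels the $L_*$ contributions on the left while keeping a non-positive residual on the right, leaving $\ell_{\mathcal{F}}(x_i,c;\boldsymbol{\theta}_o) \le \ell_{\mathcal{F}}(x_i,c;\boldsymbol{\theta}_*)$. Since $\ell_{\mathcal{F}}$ is the cross-entropy loss, $\ell_{\mathcal{F}}(x_i,c;\boldsymbol{\theta}) = -\log p_{\boldsymbol{\theta}}(c|x_i)$, and the monotonicity of $-\log$ immediately gives $p_{\boldsymbol{\theta}_o}(c|x_i) \ge p_{\boldsymbol{\theta}_*}(c|x_i)$.

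The above argument is exact when both parameter vectors are true minimizers of their respective empirical risks, which is why I would state it first as the clean, core justification. To support the statement in the practical deep-network regime where only local minima are attained, I would then supplement it with a leave-one-out / influence-function expansion. A first-order analysis around $\boldsymbol{\theta}_*$ gives $\boldsymbol{\theta}_o - \boldsymbol{\theta}_* \approx -H^{-1}\,\nabla_{\boldsymbol{\theta}} \ell_{\mathcal{F}}(x_i,c;\boldsymbol{\theta}_*)$, where $H$ is the empirical Hessian of $L_*$ at $\boldsymbol{\theta}_*$. Combining this with the softmax identity $\nabla_{\boldsymbol{\theta}} p_{\boldsymbol{\theta}}(c|x_i) = -p_{\boldsymbol{\theta}}(c|x_i)\,\nabla_{\boldsymbol{\theta}}\ell_{\mathcal{F}}(x_i,c;\boldsymbol{\theta})$ and linearizing the prediction yields
\[
p_{\boldsymbol{\theta}_o}(c|x_i) - p_{\boldsymbol{\theta}_*}(c|x_i) \;\approx\; p_{\boldsymbol{\theta}_*}(c|x_i)\,\big(\nabla\ell_{\mathcal{F}}\big)^{\!\top} H^{-1}\,\nabla\ell_{\mathcal{F}},
\]
which is non-negative whenever $H \succeq 0$ at the local optimum.

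The main obstacle is precisely this non-convexity issue: the cleanest form of the claim rests on global optimality, but the networks in the paper's experiments are trained to local minima under stochastic optimization, initialization randomness, and early stopping. I would handle this honestly by (i) stating the exact, minimizer-based argument as the formal proof, (ii) using the influence-function expansion to cover the local-minimum regime where the Hessian is (approximately) positive semidefinite, and (iii) explicitly tying the "generally holds" qualifier in the proposition to the two assumptions above, noting that pathological counterexamples can arise only when either $\boldsymbol{\theta}_o$ fails to improve upon $\boldsymbol{\theta}_*$ on $L_*$ or when $H$ has non-trivial negative curvature in the relevant direction — situations that are atypical for well-trained classifiers and are consistent with the empirical evidence cited elsewhere in the paper.
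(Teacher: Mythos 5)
Your core argument is exactly the paper's proof: the two optimality inequalities for the full and leave-one-out empirical risks, subtracted to give $\ell(\boldsymbol{\theta}_o;x_i,c)\le\ell(\boldsymbol{\theta}_*;x_i,c)$, then monotonicity of $-\log$ under cross-entropy. The supplementary influence-function expansion and the explicit discussion of the global-vs-local-minimum caveat go beyond what the paper states, but they do not change the approach and the proposal is correct.
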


\begin{proof}
   The proof is based on the principle of minimizing a loss function during model training. We use the standard cross-entropy loss (or negative log-likelihood) for classification.
   The cross-entropy loss for a single sample $(x_j, y_j)$ is given by:
   \[
   \ell(\boldsymbol{\theta}; x_i, y_i) = -\log p_{\boldsymbol{\theta}}(y_i|x_i)
   \]
   Let $\mathcal{D}_* = \mathcal{D} \setminus \{(x_i, y_i=c)\}$. The parameters $\boldsymbol{\theta}_*$ for the model trained on $\mathcal{D}_*$ are the solution to the following optimization problem:
   \[
   \boldsymbol{\theta}_* = \arg\min_{\boldsymbol{\theta}} \mathcal{L}_*(\boldsymbol{\theta}) \quad \text{where} \quad \mathcal{L}_*(\boldsymbol{\theta}) = \sum_{(x_j, y_j) \in \mathcal{D}_*} L(\boldsymbol{\theta}; x_j, y_j)
   \]
   The parameters $\boldsymbol{\theta}_o$ for the model trained on the full dataset $\mathcal{D} = \mathcal{D}_* \cup \{(x_i, y_i=c)\}$ are the solution to:
   \[
   \boldsymbol{\theta}_o = \arg\min_{\boldsymbol{\theta}} \mathcal{L}_o(\boldsymbol{\theta})
   \]
   where the total loss $\mathcal{L}_o(\boldsymbol{\theta})$ can be expressed as:
   \[
   \mathcal{L}_o(\boldsymbol{\theta}) = \mathcal{L}_*(\boldsymbol{\theta}) + L(\boldsymbol{\theta}; x_i, y_i=c)
   \]
   By the definition of $\boldsymbol{\theta}_o$ and $\boldsymbol{\theta}_*$ as the minimizers of their respective loss functions, we have the following two conditions:
   \begin{enumerate}
       \item Since $\boldsymbol{\theta}_o$ minimizes $\mathcal{L}_o(\boldsymbol{\theta})$, its value at $\boldsymbol{\theta}_o$ must be less than or equal to its value at any other point, including $\boldsymbol{\theta}_*$:
       \begin{equation}
           \label{eq:opt1}
           \mathcal{L}_o(\boldsymbol{\theta}_o) \le \mathcal{L}_o(\boldsymbol{\theta}_*)
       \end{equation}
       \item Similarly, since $\boldsymbol{\theta}_*$ minimizes $\mathcal{L}_*(\boldsymbol{\theta})$:
       \begin{equation}
           \label{eq:opt2}
           \mathcal{L}_*(\boldsymbol{\theta}_*) \le \mathcal{L}_*(\boldsymbol{\theta}_o)
       \end{equation}
   \end{enumerate}
   We begin with the first optimality condition \eqref{eq:opt1} and substitute the definition of $\mathcal{L}_o$:
   \[
   \mathcal{L}_*(\boldsymbol{\theta}_o) + L(\boldsymbol{\theta}_o; x_i, c) \le \mathcal{L}_*(\boldsymbol{\theta}_*) + L(\boldsymbol{\theta}_*; x_i, c)
   \]
   Rearranging the terms, we get:
   \[
   L(\boldsymbol{\theta}_o; x_i, c) - L(\boldsymbol{\theta}_*; x_i, c) \le \mathcal{L}_*(\boldsymbol{\theta}_*) - \mathcal{L}_*(\boldsymbol{\theta}_o)
   \]
   From the second optimality condition \eqref{eq:opt2}, we know that the right-hand side, $\mathcal{L}_*(\boldsymbol{\theta}_*) - \mathcal{L}_*(\boldsymbol{\theta}_o)$, is less than or equal to zero. Therefore:
   \[
   L(\boldsymbol{\theta}_o; x_i, c) - L(\boldsymbol{\theta}_*; x_i, c) \le 0
   \]
   Substituting the definition of the loss function $L$:
   \[
   -\log p_{\boldsymbol{\theta}_o}(c|x_i) \le -\log p_{\boldsymbol{\theta}_*}(c|x_i)
   \]
   Since the logarithm function is monotonically increasing, this inequality holds if and only if:
   \[
   p_{\boldsymbol{\theta}_o}(c|x_i) \ge p_{\boldsymbol{\theta}_*}(c|x_i)
   \]
\end{proof}

\begin{figure}[h]
   \centering
   \includegraphics[width=0.75\columnwidth]{./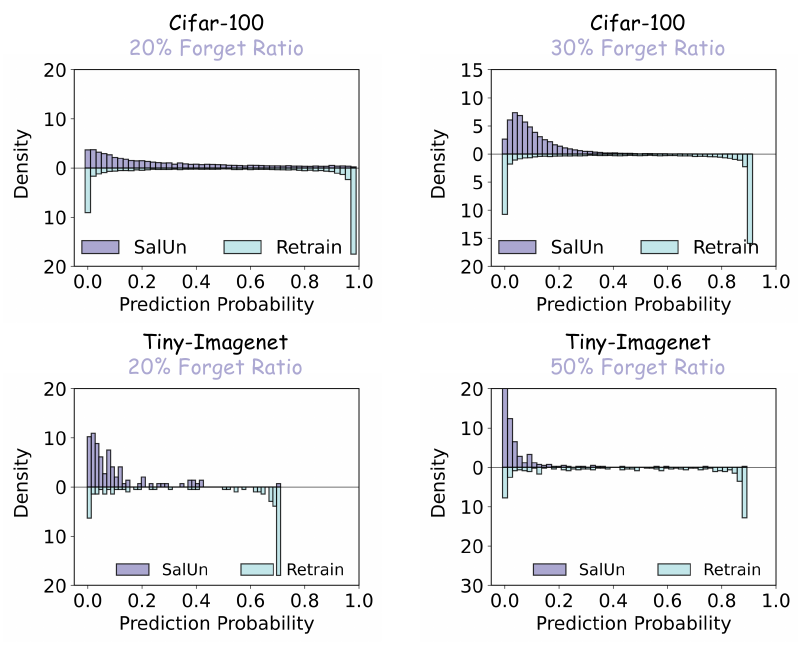}
   \caption{Comparative results of predicted probability distributions on randomly sampled forgotten data between the Salun and Retrain methods under multiple settings.}
   \label{fig:Observation_overforgetting}
\end{figure}
\section{Observation}
\label{appe_sec:observation}
\subsection{Experiments for Observation \ref{obs:salun_overforgetting}}
\label{appe:observation_salun_overforgetting}
\begin{observation*}[Proposed in paper]
    The predictive probabilities of the unlearned model $\boldsymbol{\theta}_u$ produced by SalUn are consistently lower than those of the retrained model $\boldsymbol{\theta}_*$ across various experimental settings. This indicates that despite leveraging the retain set for capacity preservation, the SalUn method still exhibits a discernible degree of over-forgetting.
\end{observation*}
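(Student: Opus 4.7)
The plan is to establish Observation~\ref{obs:salun_overforgetting} empirically by running a controlled comparison between SalUn and Retrain across a diverse grid of settings, and then aggregating and visualizing the resulting predictive-probability distributions on the forget set. First I would fix a grid that varies the dataset (CIFAR-10, CIFAR-100, Tiny-ImageNet), the backbone architecture (ResNet-18, VGG-16), and the forget ratio (for instance $10\%$, $20\%$, $30\%$, $50\%$), mirroring the configurations already used in the comparative experiments. For each configuration I would train the original model $\boldsymbol{\theta}_o$ on $\mathcal{D}$, obtain $\boldsymbol{\theta}_*$ by retraining from scratch on $\mathcal{D}_r$, and obtain $\boldsymbol{\theta}_u$ by applying SalUn with its recommended hyperparameters starting from $\boldsymbol{\theta}_o$.

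Next, for each configuration I would evaluate both models on every $(x_i, c) \in \mathcal{D}_f$, collect the true-class probabilities $\{p_{\boldsymbol{\theta}_u}(c|x_i)\}$ and $\{p_{\boldsymbol{\theta}_*}(c|x_i)\}$, and plot them as paired histograms or kernel density estimates analogous to Figure~\ref{fig:over_forget}(b). To turn the qualitative claim into quantitative evidence, I would report summary statistics — mean, median, and selected quantiles of each distribution — together with the mean per-sample gap $\Delta = \mathbb{E}_{(x_i,c) \in \mathcal{D}_f}\bigl[p_{\boldsymbol{\theta}_*}(c|x_i) - p_{\boldsymbol{\theta}_u}(c|x_i)\bigr]$, and run a one-sided paired test (for example Wilcoxon signed-rank) to verify that $p_{\boldsymbol{\theta}_u}(c|x_i) < p_{\boldsymbol{\theta}_*}(c|x_i)$ holds in a statistically significant sense across configurations. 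Figure~\ref{fig:Observation_overforgetting} is intended to collect these paired visualizations across settings.

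The main obstacle I anticipate is ensuring that the observed over-forgetting is robust to SalUn's hyperparameter choices — in particular the sparsity ratio of its saliency mask, the unlearning learning rate, the number of unlearning epochs, and the stochastic random-labeling step. A careless choice could either under-apply the unlearning gradient and mask the effect, or drive the model well past the Retrain target and exaggerate it. To mitigate this I would sweep these hyperparameters within the ranges recommended by the original SalUn paper and check that the \emph{direction} of the gap persists qualitatively across the sweep, even when its magnitude varies. To further strengthen the claim, I would average over multiple random seeds per configuration and, importantly, restrict attention to balanced (uniformly random) forget sets — the regime for which the observation is stated — so as to isolate the over-forgetting pattern from the heterogeneous class-wise deviations that Observation~\ref{obs:long_tail_unlearning_traits} later attributes specifically to long-tailed forget distributions.
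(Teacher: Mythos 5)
Your proposal matches the paper's own approach: the observation is supported purely empirically, by comparing the predicted-probability distributions of SalUn and Retrain on randomly sampled forget sets across multiple datasets, architectures, and forget ratios (Figure~\ref{fig:over_forget}(b) and Appendix~\ref{appe:observation_salun_overforgetting}), exactly as you outline. Your additions of summary statistics, a paired significance test, and hyperparameter/seed sweeps go beyond what the paper reports but strengthen, rather than alter, the same argument.
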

Further results, presented in {Figure~\ref{fig:Observation_overforgetting}}, provide additional evidence of baseline limitations. It is observed that across multiple settings, the predictive probabilities of SalUn on the forget set are consistently and significantly lower than those of the Retrain. This demonstrates that even when leveraging the retain set and applying parameter constraints, the SalUn method is still prone to severe over-forgetting.
Overall, the experimental results are consistent with our proposed {Observation~\ref{obs:salun_overforgetting}}.

\begin{figure*}[h]
   \centering
   \includegraphics[width=\textwidth]{./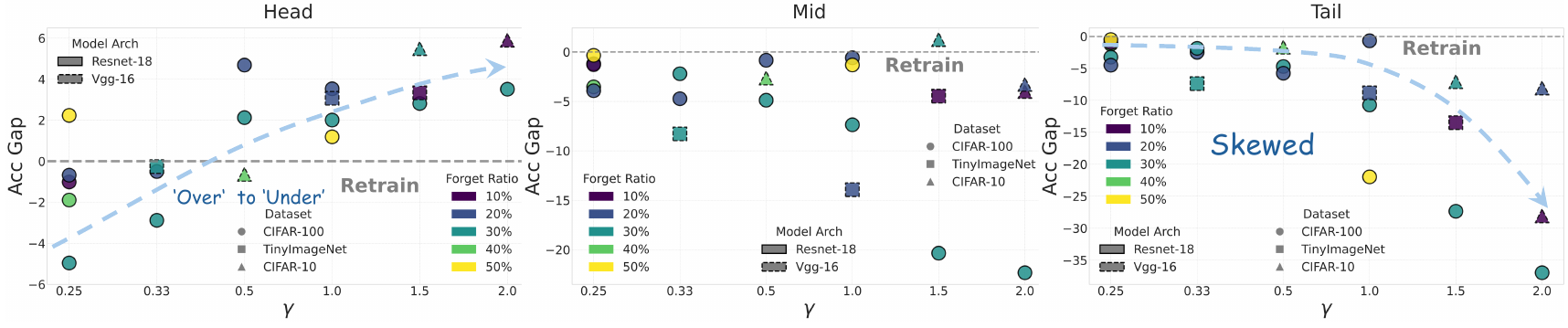}
   \caption{Performance discrepancies between the SalUn method and Retrain on the head, middle, and tail segments of forgotten data under different long-tailed configurations across various experimental settings.}
   \label{fig:Observation_Skewed}
\end{figure*}
\subsection{Experiments for Observation \ref{obs:long_tail_unlearning_traits}}
\label{appe:observation_long_tail_unlearning_traits}
\begin{observation*}[Proposed in paper]
    Under a long-tailed forget distribution, approximate unlearning methods exhibit two key essences:
    \begin{enumerate}[left=1.5em]
        \item[\ding{182}] \textbf{Heterogeneous Unlearning Deviation:} The model demonstrates disparate unlearning patterns across classes, typically under-forgetting samples from head classes while over-forgetting those from tail classes.
        \item[\ding{183}] \textbf{Skewed Unlearning Deviation:} The magnitude of the unlearning deviation is disproportionately larger for tail-class data compared to head- and medium-class data.
    \end{enumerate}
\end{observation*}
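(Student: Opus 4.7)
The plan is to demonstrate this observation empirically rather than prove it analytically, since it describes systematic behaviors of a broad class of approximate unlearning methods in long-tailed regimes. The core strategy is to instantiate the Unlearning Deviation of Definition~\ref{def:unlearning_deviation} at the class-group level and show that its sign and magnitude vary systematically with class frequency in the forget set. Concretely, I would fix a canonical setup (e.g., ResNet-18 on CIFAR-100) and construct a family of long-tailed forget sets indexed by the exponent $\gamma$ in $\mathcal{N}_{f,k}\propto k^{-\gamma}$, partitioning classes into head, medium, and tail groups using the same frequency-based rule already introduced in the Preliminaries.

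For each $\gamma$, I would obtain the gold-standard retrained parameters $\boldsymbol{\theta}_*$ and the approximately unlearned parameters $\boldsymbol{\theta}_u$ from a representative method (SalUn serves well, since Observation~\ref{obs:salun_overforgetting} already establishes it as a strong instance of Formulation~\ref{form:optimization_AP}). I would then compute, for each group $g\in\{\text{head},\text{mid},\text{tail}\}$, the signed average deviation
\begin{equation}
\Delta_g(\gamma)=\frac{1}{|\mathcal{D}_{f,g}|}\sum_{(x_i,c)\in\mathcal{D}_{f,g}}\bigl(p_{\boldsymbol{\theta}_u}(c|x_i)-p_{\boldsymbol{\theta}_*}(c|x_i)\bigr),
\end{equation}
together with a group-wise Forgetting Accuracy gap $\mathrm{FA}_g(\boldsymbol{\theta}_u)-\mathrm{FA}_g(\boldsymbol{\theta}_*)$. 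Claim \ding{182} reduces to showing that as $\gamma$ grows, $\Delta_{\text{head}}(\gamma)$ becomes positive (under-forgetting) while $\Delta_{\text{tail}}(\gamma)$ becomes negative (over-forgetting), i.e., the sign of the deviation flips across groups. Claim \ding{183} reduces to showing that $|\Delta_{\text{tail}}(\gamma)|$ dominates $|\Delta_{\text{head}}(\gamma)|$ and $|\Delta_{\text{mid}}(\gamma)|$, with the gap widening in $\gamma$. Figure~\ref{fig:Observation_LT} already supplies the main panel for the canonical setting; the appendix evidence should replicate this trend across architectures (ResNet-18, VGG-16), datasets (CIFAR-10/100, Tiny-ImageNet), forget ratios, and several baseline unlearning methods, which is what Figure~\ref{fig:Observation_Skewed} is designed to show.

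To give the empirical claim interpretive backing, I would also sketch a short mechanistic argument. The forgetting term in Formulation~\ref{form:optimization_AP} contributes gradient mass proportional to $\mathcal{N}_{f,k}$ per class, whereas the capacity-preservation term on $\mathcal{D}_r$ contributes mass proportional to $\mathcal{N}_{r,k}$. For head classes, $\mathcal{N}_{f,k}$ is large but $\mathcal{N}_{r,k}$ is comparably large, so the preservation gradient partially cancels the forgetting gradient and leaves residual knowledge, i.e., under-forgetting. For tail classes, $\mathcal{N}_{f,k}$ is small in absolute terms but the ratio $\mathcal{N}_{f,k}/\mathcal{N}_{r,k}$ is sharply tilted by the power-law form $\mathcal{N}_{f,k}\propto k^{-\gamma}$, and the class-wise preservation signal is too weak to counterbalance shared-parameter interference from the dominant head-class updates, yielding disproportionately large negative deviation. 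This explains both the sign reversal (\ding{182}) and the skewed magnitude (\ding{183}).

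The main obstacle will be ensuring that the observation is not an artifact of one method, one threshold $\tau_i$, or a single training schedule. I would mitigate this by: (i) reporting the full distributions of $p_{\boldsymbol{\theta}_u}(c|x_i)-p_{\boldsymbol{\theta}_*}(c|x_i)$ per group rather than only binary under/over labels, so the reader does not depend on a specific $\tau_i$; (ii) sweeping $\gamma\in\{0,1/4,1/3,1/2,1,3/2,2\}$ so the qualitative transition is visible rather than anecdotal; and (iii) repeating over multiple representative baselines from Formulation~\ref{form:optimization_AP} so the phenomenon is tied to the structure of the objective, not to idiosyncrasies of any one algorithm. If these three conditions are met, the two claims of Observation~\ref{obs:long_tail_unlearning_traits} follow directly from the plotted group-wise deviations.
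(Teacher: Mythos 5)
Your proposal matches the paper's own justification: Observation~\ref{obs:long_tail_unlearning_traits} is supported empirically by constructing power-law forget sets $\mathcal{N}_{f,k}\propto k^{-\gamma}$ over a sweep of $\gamma$, partitioning into head/mid/tail groups, and reporting group-wise forgetting-accuracy gaps between SalUn and Retrain across datasets, architectures, and forget ratios (Figure~\ref{fig:Observation_LT} and Appendix~\ref{appe:observation_long_tail_unlearning_traits}), which is precisely your plan. Your additions (the signed probability deviation $\Delta_g(\gamma)$ and the gradient-mass heuristic) go slightly beyond what the paper presents but do not change the approach.
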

We conducted further experiments under various imbalance levels and other settings, with detailed results shown in {Figure~\ref{fig:Observation_Skewed}}. A consistent pattern emerges: as the degree of imbalance increases, the unlearning behavior for head-class data generally shifts from over-forgetting to under-forgetting, corroborating our first point in {Observation~\ref{obs:long_tail_unlearning_traits}}. Conversely, for tail-class data, the unlearning performance deteriorates further as the imbalance becomes more pronounced, leading to more severe over-forgetting. This confirms our second point in Observation~\ref{obs:long_tail_unlearning_traits}. Thus, {Observation~\ref{obs:long_tail_unlearning_traits}}  is in line with the results of a large number of experiments.

\begin{algorithm*}[h]
   \caption{The FaLW Algorithm (RL-based version)}
   \label{alg:falw}
   \begin{algorithmic}[0]
       \Require 
       Original model parameters $\boldsymbol{\theta}_o$; 
       Forget set $\mathcal{D}_f$; Retain set $\mathcal{D}_r$; Validation set $\mathcal{D}_{val}$.
       \Require 
       Hyperparameters: learning rate $\eta_{lr}$, exponents $\tau, \eta$, number of epochs $E$.
       
       \State $\boldsymbol{\theta} \gets \boldsymbol{\theta}_o$
       \State Construct relabeled forget set $\mathcal{D}'_f \gets \{(x_i, y'_i) | (x_i, y_i) \in \mathcal{D}_f, y'_i \neq y_i\}$
       \State Construct training set for unlearning $\mathcal{D}_{unlearn} \gets \mathcal{D}'_f \cup \mathcal{D}_r$
       \State Pre-compute class-wise balancing factor $\mathcal{B}_c$ for each class $c$ using Eq. (6)
       
       \For{epoch $\leftarrow 1 \dots E$}
       \For{each batch $b \subset \mathcal{D}_{unlearn}$}
               \State Estimate target distribution parameters $\mu_c, \sigma_c$ for each class $c$ from predictions on $\mathcal{D}_{val}$
               \State Partition the batch $b$ into forget part $b_f$ and retain part $b_r$
               
               \State Calculate retain loss $\{\mathcal{L}_{r,i}\} \gets \text{loss}(b_r; \boldsymbol{\theta})$
               \State Calculate forget loss $\{\mathcal{L}_{f,i}\}$ and probabilities $\{p_i\}$ on $b_f$
               \State Calculate weights $\{w_{f,i}\}$ for samples in $b_f$ using $\mu_c, \sigma_c, \{p_i\}, \mathcal{B}_c$ via Eq. (7)
               
               \State $\mathcal{L}_{total} \gets \text{mean}(\{w_{f,i} \cdot \mathcal{L}_{f,i} \}\text{ , }\{\mathcal{L}_{r,i}\})$ \Comment{Combine losses}
               \State $\boldsymbol{\theta} \gets \boldsymbol{\theta} - \eta_{lr} \nabla_{\boldsymbol{\theta}}\mathcal{L}_{total}$ \Comment{Update parameters via SGD}
           \EndFor
       \EndFor
       \State \textbf{return} $\boldsymbol{\theta}$
   \end{algorithmic}
\end{algorithm*}

\section{Experiments Complement}
\label{appe_sec:experiments}

\subsection{Implementation Details}
\label{appe:implementation}
For all methods, we use an SGD optimizer with a momentum of 0.9, a weight decay of 5e-4, and a batch size of 512. For the Retrain, we train for 150 epochs with an initial learning rate of 0.01, which is decayed by a factor of 10 at epochs 90 and 120. For all baselines, we unlearn for 10 to 20 epochs, and the learning rate is tuned within the range of [0.0005, 0.02], with generally lower rates used for Tiny-ImageNet and higher rates for CIFAR-10 and CIFAR-100. For SalUn \citep{fan2024salun}, we use the official implementation's default setting, which prunes the top 50\% of parameters based on gradient magnitude. Similarly, for SFRon \citep{huang2024unified}, we reproduced our results following the official code and hyperparameter settings provided in its original publication. The hyperparameter $\tau$ for our balancing factor in FaLW is tuned within the range [0.1, 0.2]. The pseudo-code for FaLW is provided in {Algorithm~\ref{alg:falw}}, and our source code is included in the supplementary materials.

\subsection{More Comparative Results.}
\label{appe:comparative_results}
We conducted a comprehensive evaluation of FaLW against baselines on CIFAR-100, CIFAR-10, and Tiny-ImageNet, performing a multi-faceted comparison across various forget ratios, imbalance levels, and model architectures. {Table~\ref{tab:Comparative_Results_Cifar100}} presents the results for the ResNet-18 architecture on CIFAR-100 across multiple settings. The results for the VGG-16 architecture on CIFAR-10 and Tiny-ImageNet are detailed in {Table~\ref{tab:Comparative_Results_Cifar10}} and {Table~\ref{tab:Comparative_Results_Tiny-Imagenet}}, respectively. The findings consistently show that our method, FaLW, outperforms the baselines in all evaluated settings, reducing the Avg. Gap by up to 3\%.

\begin{figure}[h]
   \centering
   \includegraphics[width=\columnwidth]{./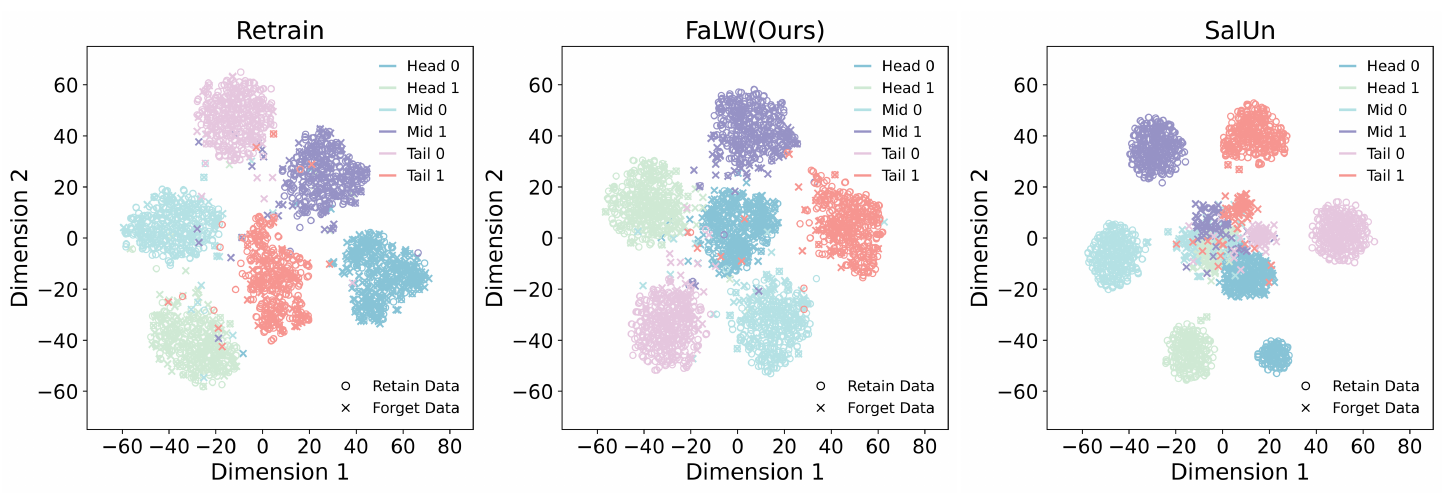}
   \caption{Experimental setup: \textbf{20\%} of the \textbf{Cifar100} dataset is randomly selected as the forgetting set, following a data distribution of $\mathcal{N}_k \propto k^{\mathbf{-0.25}}$. Under this setting, we select 2 head classes, 2 middle classes, and 2 tail classes, and visualize their representations via t-SNE across three methods: Retrain, SalUn, and FaLW. Each '$\bigcirc$' represents a sample in $\mathcal{D}_r$, while each '${} \times {}$' denote a sample from $\mathcal{D}_f$.}\vspace{-0.3cm}
   \label{fig:tsne}
\end{figure}
\subsection{t-SNE Visualization Analysis}
\label{appe:tsne}
To analyze the characteristics of forgotten data in the embedding space after approximate unlearning, we use t-SNE to visualize the representation. The experiment is set with ResNet-18 trained on CIFAR-100 by a 20\% forget set following a distribution of $\mathcal{N}_k \propto k^{-0.25}$. We visualize samples from two heads, two medians, and two tails, with the results presented in Figure~\ref{fig:tsne}. It is evident that the feature distribution of FaLW closely resembles that of the gold-standard Retrain model. Specifically, for the forgotten samples, FaLW, much like Retrain, does not aggressively push their representations away from their original class clusters, thus achieving faithful unlearning. In stark contrast, SalUn displaces the forgotten samples far from the Retrain data clusters, exhibiting severe over-forgetting.

\begin{table*}[t!]
   \vspace{-20pt}
   \centering
   \caption{Results on the \textbf{CIFAR-100} dataset, with the forget set following a distribution of $\mathcal{N}_k \propto \frac{1}{k^{0.25}}$. ($\cdot$) indicates performance gaps between approximate unlearning methods and the Retrain method across metrics.}
   \label{tab:Comparative_Results_Cifar100}
   \small 
   \resizebox{0.6\textwidth}{!}{\begin{tabular}{l|cccc>{\columncolor{gray!10}}c}
       \toprule
       & \multicolumn{5}{c}{\textbf{CIFAR-100}, \textbf{ResNet-18} , $\mathbf{\gamma=1/4}$} \\
       \cmidrule(lr){2-6}
       \textbf{Method} & \multicolumn{5}{c}{\textbf{Random Forget (10\%)}} \\
       \cmidrule(lr){2-6}
        & \textbf{FA} & \textbf{RA} & \textbf{TA} & \textbf{MIA} & \textbf{Avg. Gap} \\
       \midrule
       Retrain & 64.82 \textcolor{black}{(0.00)} & 99.95 \textcolor{black}{(0.00)} & 64.24 \textcolor{black}{(0.00)} & 35.18 \textcolor{black}{(0.00)} & 0.00 \\
           \midrule
       FT & 96.27 \textcolor{black}{(31.44)} & 99.92 \textcolor{black}{(0.03)} & 70.30 \textcolor{black}{(6.06)} & 3.73 \textcolor{black}{(31.44)} & 17.24 \\
       RL & 31.07 \textcolor{black}{(33.76)} & 97.52 \textcolor{black}{(2.42)} & 62.22 \textcolor{black}{(2.02)} & 48.93 \textcolor{black}{(13.76)} & 12.99 \\
       GA & 95.22 \textcolor{black}{(30.40)} & 96.66 \textcolor{black}{(3.29)} & 68.83 \textcolor{black}{(4.59)} & 4.78 \textcolor{black}{(30.40)} & 17.17 \\
       IU & 96.87 \textcolor{black}{(32.05)} & 96.93 \textcolor{black}{(3.02)} & 70.49 \textcolor{black}{(6.25)} & 3.13 \textcolor{black}{(32.04)} & 18.34 \\
       BE & 96.71 \textcolor{black}{(31.89)} & 96.81 \textcolor{black}{(3.13)} & 69.61 \textcolor{black}{(5.37)} & 3.29 \textcolor{black}{(31.89)} & 18.07 \\
       BS & 96.53 \textcolor{black}{(31.71)} & 96.60 \textcolor{black}{(3.35)} & 68.88 \textcolor{black}{(4.64)} & 3.47 \textcolor{black}{(31.71)} & 17.85 \\
       $l_1$-sparse & 91.24 \textcolor{black}{(26.42)} & 91.38 \textcolor{black}{(8.57)} & 64.14 \textcolor{black}{(0.10)} & 8.76 \textcolor{black}{(26.42)} & 15.38 \\
       SFRon & 62.69 \textcolor{black}{(2.13)} & 99.98 \textcolor{black}{(0.03)} & 66.70 \textcolor{black}{(2.46)} & 38.31 \textcolor{black}{(3.13)} &1.94  \\
       SalUn & 65.80 \textcolor{black}{(0.98)} & 99.95 \textcolor{black}{(0.00)} & 63.39 \textcolor{black}{(0.85)} & 32.20 \textcolor{black}{(2.98)} & 1.20 \\
           \midrule
       \rowcolor{gray!10} \textbf{FalW} & 65.12 \textcolor{black}{(0.30)} & 99.98 \textcolor{black}{(0.03)} & 65.86 \textcolor{black}{(1.62)} & 33.58 \textcolor{black}{(1.60)} & \underline{\textbf{0.89}} \\
       \bottomrule
   \end{tabular}
   }

   \resizebox{0.6\textwidth}{!}{
  \begin{tabular}{l|cccc>{\columncolor{gray!10}}c}
     \toprule
     & \multicolumn{5}{c}{\textbf{CIFAR-100}, \textbf{ResNet-18} , $\mathbf{\gamma=1/4}$} \\
     \cmidrule(lr){2-6}
     \textbf{Method} & \multicolumn{5}{c}{\textbf{Random Forget (20\%)}} \\
     \cmidrule(lr){2-6}
      & \textbf{FA} & \textbf{RA} & \textbf{TA} & \textbf{MIA} & \textbf{Avg. Gap} \\
     \midrule
     Retrain & 62.09 \textcolor{black}{(0.00)} & 99.95 \textcolor{black}{(0.00)} & 62.10 \textcolor{black}{(0.00)} & 37.91 \textcolor{black}{(0.00)} & 0.00 \\
         \midrule
     FT & 96.61 \textcolor{black}{(34.52)} & 99.93 \textcolor{black}{(0.02)} & 69.98 \textcolor{black}{(7.88)} & 3.39 \textcolor{black}{(34.52)} & 19.24 \\
     RL & 47.49 \textcolor{black}{(14.60)} & 96.99 \textcolor{black}{(2.96)} & 61.46 \textcolor{black}{(0.64)} & 42.51 \textcolor{black}{(4.60)} & 5.70 \\
     GA & 95.22 \textcolor{black}{(33.13)} & 96.66 \textcolor{black}{(3.30)} & 68.83 \textcolor{black}{(6.73)} & 4.78 \textcolor{black}{(33.13)} & 19.07 \\
     IU & 97.02 \textcolor{black}{(34.93)} & 96.91 \textcolor{black}{(3.05)} & 70.57 \textcolor{black}{(8.47)} & 2.98 \textcolor{black}{(34.93)} & 20.35 \\
     BE & 95.41 \textcolor{black}{(33.32)} & 96.06 \textcolor{black}{(3.89)} & 65.83 \textcolor{black}{(3.73)} & 4.59 \textcolor{black}{(33.32)} & 18.57 \\
     BS & 87.04 \textcolor{black}{(24.95)} & 87.67 \textcolor{black}{(12.29)} & 58.13 \textcolor{black}{(3.97)} & 12.96 \textcolor{black}{(24.96)} & 16.54 \\
     $l_1$-sparse & 94.02 \textcolor{black}{(31.93)} & 93.88 \textcolor{black}{(6.07)} & 65.51 \textcolor{black}{(3.41)} & 5.98 \textcolor{black}{(31.93)} & 18.34 \\
     SFRon & 65.30 \textcolor{black}{(3.21)} & 99.98 \textcolor{black}{(0.03)} & 66.42 \textcolor{black}{(4.32)} & 33.69 \textcolor{black}{(4.22)} & 2.95  \\
     SalUn & 59.71 \textcolor{black}{(2.38)} & 96.92 \textcolor{black}{(3.03)} & 60.11 \textcolor{black}{(1.99)} & 40.29 \textcolor{black}{(2.38)} & 2.44 \\
         \midrule
     \rowcolor{gray!10} \textbf{FalW} & 61.74 \textcolor{black}{(0.34)} & 99.98 \textcolor{black}{(0.02)} & 62.69 \textcolor{black}{(0.59)} & 38.26 \textcolor{black}{(0.34)} & \underline{\textbf{0.33}} \\
     \bottomrule
 \end{tabular}
  }

   \resizebox{0.6\textwidth}{!}{\begin{tabular}{l|cccc>{\columncolor{gray!10}}c}
       \toprule
       & \multicolumn{5}{c}{\textbf{CIFAR-100}, \textbf{ResNet-18} , $\mathbf{\gamma=1/4}$} \\
       \cmidrule(lr){2-6}
       \textbf{Method} & \multicolumn{5}{c}{\textbf{Random Forget (30\%)}} \\
       \cmidrule(lr){2-6}
        & \textbf{FA} & \textbf{RA} & \textbf{TA} & \textbf{MIA} & \textbf{Avg. Gap} \\
       \midrule
       Retrain & 57.13 \textcolor{black}{(0.00)} & 99.96 \textcolor{black}{(0.00)} & 59.24 \textcolor{black}{(0.00)} & 42.87 \textcolor{black}{(0.00)} & 0.00 \\
           \midrule
       FT & 96.32 \textcolor{black}{(39.19)} & 99.95 \textcolor{black}{(0.01)} & 69.80 \textcolor{black}{(10.56)} & 3.68 \textcolor{black}{(39.19)} & 22.23 \\
       RL & 44.24 \textcolor{black}{(12.90)} & 96.34 \textcolor{black}{(3.62)} & 58.08 \textcolor{black}{(1.16)} & 65.76 \textcolor{black}{(22.90)} & 10.14 \\
       GA & 85.51 \textcolor{black}{(28.38)} & 89.24 \textcolor{black}{(10.72)} & 61.40 \textcolor{black}{(2.16)} & 14.49 \textcolor{black}{(28.38)} & 17.41 \\
       IU & 97.02 \textcolor{black}{(39.88)} & 96.89 \textcolor{black}{(3.07)} & 70.41 \textcolor{black}{(11.17)} & 2.98 \textcolor{black}{(39.88)} & 23.50 \\
       BE & 75.60 \textcolor{black}{(18.47)} & 79.53 \textcolor{black}{(20.43)} & 48.09 \textcolor{black}{(11.15)} & 24.40 \textcolor{black}{(18.47)} & 17.13 \\
       BS & 56.79 \textcolor{black}{(0.34)} & 58.94 \textcolor{black}{(41.01)} & 38.46 \textcolor{black}{(20.78)} & 43.21 \textcolor{black}{(0.34)} & 15.62 \\
       $l_1$-sparse & 95.64 \textcolor{black}{(38.50)} & 95.38 \textcolor{black}{(4.58)} & 66.65 \textcolor{black}{(7.41)} & 4.36 \textcolor{black}{(38.50)} & 22.25 \\
       SFRon & 54.46 \textcolor{black}{(2.67)} & 99.97 \textcolor{black}{(0.01)} & 60.72 \textcolor{black}{(1.48)} & 45.54 \textcolor{black}{(2.67)} & 1.71  \\
       SalUn & 55.19 \textcolor{black}{(1.95)} & 99.96 \textcolor{black}{(0.00)} & 59.98 \textcolor{black}{(0.74)} & 39.41 \textcolor{black}{(3.46)} & 1.54 \\
           \midrule
       \rowcolor{gray!10} \textbf{FalW} & 57.73 \textcolor{black}{(0.60)} & 99.98 \textcolor{black}{(0.02)} & 60.67 \textcolor{black}{(1.43)} & 41.27 \textcolor{black}{(1.60)} & \underline{\textbf{0.91}} \\
       \bottomrule
   \end{tabular}
   
   }

   \resizebox{0.6\textwidth}{!}{
  \begin{tabular}{l|cccc>{\columncolor{gray!10}}c}
     \toprule
     & \multicolumn{5}{c}{\textbf{CIFAR-100}, \textbf{ResNet-18} , $\mathbf{\gamma=1/4}$} \\
     \cmidrule(lr){2-6}
     \textbf{Method} & \multicolumn{5}{c}{\textbf{Random Forget (40\%)}} \\
     \cmidrule(lr){2-6}
      & \textbf{FA} & \textbf{RA} & \textbf{TA} & \textbf{MIA} & \textbf{Avg. Gap} \\
     \midrule
     Retrain & 53.01 \textcolor{black}{(0.00)} & 99.95 \textcolor{black}{(0.00)} & 56.41 \textcolor{black}{(0.00)} & 46.99 \textcolor{black}{(0.00)} & 0.00 \\
         \midrule
     FT & 96.24 \textcolor{black}{(43.23)} & 99.96 \textcolor{black}{(0.01)} & 69.46 \textcolor{black}{(13.05)} & 3.76 \textcolor{black}{(43.23)} & 24.88 \\
     RL & 33.86 \textcolor{black}{(19.15)} & 95.17 \textcolor{black}{(4.78)} & 54.51 \textcolor{black}{(1.90)} & 66.14 \textcolor{black}{(19.16)} & 11.25 \\
     GA & 96.88 \textcolor{black}{(43.87)} & 96.91 \textcolor{black}{(3.04)} & 70.44 \textcolor{black}{(14.03)} & 3.12 \textcolor{black}{(43.87)} & 26.20 \\
     IU & 96.96 \textcolor{black}{(43.95)} & 96.91 \textcolor{black}{(3.04)} & 70.27 \textcolor{black}{(13.86)} & 3.04 \textcolor{black}{(43.95)} & 26.20 \\
     BE & 96.92 \textcolor{black}{(43.91)} & 96.85 \textcolor{black}{(3.10)} & 70.33 \textcolor{black}{(13.92)} & 3.08 \textcolor{black}{(43.91)} & 26.21 \\
     BS & 96.92 \textcolor{black}{(43.91)} & 96.88 \textcolor{black}{(3.07)} & 70.23 \textcolor{black}{(13.82)} & 3.08 \textcolor{black}{(43.91)} & 26.18 \\
     $l_1$-sparse & 96.26 \textcolor{black}{(43.24)} & 96.23 \textcolor{black}{(3.72)} & 67.75 \textcolor{black}{(11.34)} & 3.74 \textcolor{black}{(43.24)} & 25.39 \\
     SFRon & 51.33 \textcolor{black}{(1.68)} & 99.98 \textcolor{black}{(0.03)} & 58.59 \textcolor{black}{(2.18)} & 48.67 \textcolor{black}{(1.68)} & 1.39  \\
     SalUn & 51.27 \textcolor{black}{(1.74)} & 99.97 \textcolor{black}{(0.02)} & 57.89 \textcolor{black}{(1.48)} & 49.73 \textcolor{black}{(2.74)} & 1.49 \\
         \midrule
     \rowcolor{gray!10} \textbf{FalW} & 53.32 \textcolor{black}{(0.31)} & 99.98 \textcolor{black}{(0.03)} & 57.71 \textcolor{black}{(1.30)} & 45.68 \textcolor{black}{(1.31)} & \underline{\textbf{0.74}} \\
     \bottomrule
 \end{tabular}
  
  }
   
   \resizebox{0.6\textwidth}{!}{\begin{tabular}{l|cccc>{\columncolor{gray!10}}c}
       \toprule
       & \multicolumn{5}{c}{\textbf{CIFAR-100}, \textbf{ResNet-18} , $\mathbf{\gamma=1/4}$} \\
       \cmidrule(lr){2-6}
       \textbf{Method} & \multicolumn{5}{c}{\textbf{Random Forget (50\%)}} \\
       \cmidrule(lr){2-6}
        & \textbf{FA} & \textbf{RA} & \textbf{TA} & \textbf{MIA} & \textbf{Avg. Gap} \\
       \midrule
       Retrain & 48.44 \textcolor{black}{(0.00)} & 99.94 \textcolor{black}{(0.00)} & 51.46 \textcolor{black}{(0.00)} & 51.56 \textcolor{black}{(0.00)} & 0.00 \\
           \midrule
       FT & 95.79 \textcolor{black}{(47.35)} & 99.96 \textcolor{black}{(0.02)} & 69.15 \textcolor{black}{(17.69)} & 5.78 \textcolor{black}{(45.78)} & 27.71 \\
       RL & 67.78 \textcolor{black}{(19.34)} & 96.30 \textcolor{black}{(3.64)} & 54.35 \textcolor{black}{(2.89)} & 28.41 \textcolor{black}{(23.15)} & 12.26 \\
       GA & 96.56 \textcolor{black}{(48.12)} & 96.86 \textcolor{black}{(3.08)} & 69.10 \textcolor{black}{(17.64)} & 3.44 \textcolor{black}{(48.12)} & 29.24 \\
       IU & 96.98 \textcolor{black}{(48.54)} & 96.88 \textcolor{black}{(3.07)} & 70.26 \textcolor{black}{(18.80)} & 3.02 \textcolor{black}{(48.54)} & 29.74 \\
       BE & 96.90 \textcolor{black}{(48.46)} & 96.93 \textcolor{black}{(3.01)} & 70.11 \textcolor{black}{(18.65)} & 3.07 \textcolor{black}{(48.49)} & 29.65 \\
       BS & 96.92 \textcolor{black}{(48.48)} & 96.88 \textcolor{black}{(3.06)} & 70.30 \textcolor{black}{(18.84)} & 3.08 \textcolor{black}{(48.48)} & 29.71 \\
       $l_1$-sparse & 75.32 \textcolor{black}{(26.88)} & 88.76 \textcolor{black}{(11.18)} & 61.54 \textcolor{black}{(10.08)} & 24.68 \textcolor{black}{(26.89)} & 18.76 \\
       SFRon & 46.30 \textcolor{black}{(2.14)} & 99.98 \textcolor{black}{(0.04)} & 54.78 \textcolor{black}{(3.32)} & 53.70 \textcolor{black}{(2.14)} & 1.91  \\
       SalUn & 52.59 \textcolor{black}{(4.15)} & 96.12 \textcolor{black}{(3.83)} & 54.71 \textcolor{black}{(3.25)} & 48.15 \textcolor{black}{(3.41)} & 3.66 \\
           \midrule
       \rowcolor{gray!10} \textbf{FalW} & 48.06 \textcolor{black}{(0.38)} & 99.98 \textcolor{black}{(0.04)} & 52.05 \textcolor{black}{(0.59)} & 52.94 \textcolor{black}{(1.38)} & \underline{\textbf{0.60}} \\
       \bottomrule
   \end{tabular}}
\end{table*}

\begin{table*}[t!]
   \centering 
   \vspace{-35pt}
   \caption{Results on the \textbf{Tiny-Imagenet} dataset. ($\cdot$) indicates performance gaps between approximate unlearning methods and the Retrain method across metrics.}
   \label{tab:Comparative_Results_Tiny-Imagenet}
   \resizebox{0.6\textwidth}{!}{
   \begin{tabular}{l|cccc>{\columncolor{gray!10}}c}
       \toprule
       & \multicolumn{5}{c}{\textbf{Tiny-Imagenet}, \textbf{Vgg-16} , $\mathbf{\gamma=0}$} \\
       \cmidrule(lr){2-6}
       \textbf{Method} & \multicolumn{5}{c}{\textbf{Random Forget (20\%)}} \\
       \cmidrule(lr){2-6}
        & \textbf{FA} & \textbf{RA} & \textbf{TA} & \textbf{MIA} & \textbf{Avg. Gap} \\
       \midrule
       Retrain & 48.87 \textcolor{black}{(0.00)} & 99.98 \textcolor{black}{(0.00)} & 48.51 \textcolor{black}{(0.00)} & 51.13 \textcolor{black}{(0.00)} & 0.00 \\
           \midrule
       FT & 95.62 \textcolor{black}{(46.76)} & 99.98 \textcolor{black}{(0.00)} & 55.27 \textcolor{black}{(6.76)} & 4.38 \textcolor{black}{(46.75)} & 25.07 \\
       RL & 32.47 \textcolor{black}{(16.40)} & 98.25 \textcolor{black}{(1.73)} & 45.37 \textcolor{black}{(3.14)} & 67.53 \textcolor{black}{(16.40)} & 9.42 \\
       GA & 94.61 \textcolor{black}{(45.74)} & 95.41 \textcolor{black}{(4.57)} & 54.27 \textcolor{black}{(5.76)} & 5.39 \textcolor{black}{(45.73)} & 25.45 \\
       IU & 95.64 \textcolor{black}{(46.77)} & 95.56 \textcolor{black}{(4.43)} & 55.55 \textcolor{black}{(7.04)} & 4.36 \textcolor{black}{(46.77)} & 26.25 \\
       BE & 94.41 \textcolor{black}{(45.54)} & 94.27 \textcolor{black}{(5.71)} & 49.87 \textcolor{black}{(1.36)} & 5.59 \textcolor{black}{(45.54)} & 24.54 \\
       BS & 92.26 \textcolor{black}{(43.39)} & 92.45 \textcolor{black}{(7.54)} & 48.25 \textcolor{black}{(0.26)} & 7.74 \textcolor{black}{(43.39)} & 23.64 \\
       $l_1$-sparse & 95.68 \textcolor{black}{(46.82)} & 95.53 \textcolor{black}{(4.45)} & 55.47 \textcolor{black}{(6.96)} & 4.32 \textcolor{black}{(46.81)} & 26.26 \\
       SFRon & 50.23 \textcolor{black}{(1.36)} & 99.98 \textcolor{black}{(0.00)} & 50.11 \textcolor{black}{(1.60)} & 50.77 \textcolor{black}{(0.36)} & 0.83  \\
       SalUn & 42.74 \textcolor{black}{(6.13)} & 99.13 \textcolor{black}{(0.85)} & 46.59 \textcolor{black}{(1.92)} & 56.26 \textcolor{black}{(5.13)} & 3.51 \\
           \midrule
       \rowcolor{gray!10} \textbf{FalW} & 48.60 \textcolor{black}{(0.27)} & 99.98 \textcolor{black}{(0.00)} & 50.67 \textcolor{black}{(2.16)} & 50.40 \textcolor{black}{(0.73)} & \underline{\textbf{0.79}} \\
       \bottomrule
   \end{tabular}
   }
   \resizebox{0.6\textwidth}{!}{\begin{tabular}{l|cccc>{\columncolor{gray!10}}c}
       \toprule
       & \multicolumn{5}{c}{\textbf{Tiny-Imagenet}, \textbf{Vgg-16} , $\mathbf{\gamma=0}$ } \\
       \cmidrule(lr){2-6}
       \textbf{Method} & \multicolumn{5}{c}{\textbf{Random Forget (50\%)}} \\
       \cmidrule(lr){2-6}
        & \textbf{FA} & \textbf{RA} & \textbf{TA} & \textbf{MIA} & \textbf{Avg. Gap} \\
       \midrule
       Retrain & 40.60 \textcolor{black}{(0.00)} & 99.99 \textcolor{black}{(0.00)} & 40.63 \textcolor{black}{(0.00)} & 59.40 \textcolor{black}{(0.00)} & 0.00 \\
           \midrule
       FT & 95.59 \textcolor{black}{(54.99)} & 99.99 \textcolor{black}{(0.00)} & 54.71 \textcolor{black}{(14.08)} & 4.41 \textcolor{black}{(54.99)} & 31.02 \\
       RL & 32.16 \textcolor{black}{(8.44)} & 96.91 \textcolor{black}{(3.08)} & 37.33 \textcolor{black}{(3.30)} & 67.84 \textcolor{black}{(8.44)} & 5.82 \\
       GA & 95.58 \textcolor{black}{(54.97)} & 95.49 \textcolor{black}{(4.51)} & 55.45 \textcolor{black}{(14.82)} & 4.42 \textcolor{black}{(54.97)} & 32.32 \\
       IU & 95.65 \textcolor{black}{(55.05)} & 95.50 \textcolor{black}{(4.50)} & 55.65 \textcolor{black}{(15.02)} & 4.34 \textcolor{black}{(55.05)} & 32.41 \\
       BE & 95.64 \textcolor{black}{(55.04)} & 95.47 \textcolor{black}{(4.52)} & 55.29 \textcolor{black}{(14.66)} & 4.36 \textcolor{black}{(55.04)} & 32.32 \\
       BS & 95.66 \textcolor{black}{(55.05)} & 95.47 \textcolor{black}{(4.53)} & 55.03 \textcolor{black}{(14.40)} & 4.34 \textcolor{black}{(55.05)} & 32.26 \\
       $l_1$-sparse & 95.68 \textcolor{black}{(55.08)} & 95.48 \textcolor{black}{(4.52)} & 55.45 \textcolor{black}{(14.82)} & 4.32 \textcolor{black}{(55.08)} & 32.37 \\
       SFRon & 38.89 \textcolor{black}{(1.71)} & 99.99 \textcolor{black}{(0.00)} & 40.45 \textcolor{black}{(0.18)} & 61.10 \textcolor{black}{(1.70)} & 0.90  \\
       SalUn & 38.49 \textcolor{black}{(2.11)} & 97.21 \textcolor{black}{(2.78)} & 36.23 \textcolor{black}{(4.40)} & 61.51 \textcolor{black}{(2.11)} & 2.85 \\
           \midrule
       \rowcolor{gray!10} \textbf{FalW} & 41.76 \textcolor{black}{(1.16)} & 99.99 \textcolor{black}{(0.00)} & 40.53 \textcolor{black}{(0.10)} & 57.24 \textcolor{black}{(2.16)} & \underline{\textbf{0.85}} \\
       \bottomrule
   \end{tabular}
   }
   \resizebox{0.6\textwidth}{!}{\begin{tabular}{l|cccc>{\columncolor{gray!10}}c}
   \toprule
   & \multicolumn{5}{c}{\textbf{Tiny-Imagenet}, \textbf{Vgg-16} , $\mathbf{\gamma=1/3}$} \\
   \cmidrule(lr){2-6}
   \textbf{Method} & \multicolumn{5}{c}{\textbf{Random Forget (30\%)}} \\
   \cmidrule(lr){2-6}
    & \textbf{FA} & \textbf{RA} & \textbf{TA} & \textbf{MIA} & \textbf{Avg. Gap} \\
   \midrule
   Retrain & 44.57 \textcolor{black}{(0.00)} & 99.98 \textcolor{black}{(0.00)} & 45.61 \textcolor{black}{(0.00)} & 55.43 \textcolor{black}{(0.00)} & 0.00 \\
       \midrule
   FT & 90.44 \textcolor{black}{(45.87)} & 99.74 \textcolor{black}{(0.25)} & 51.19 \textcolor{black}{(5.58)} & 9.56 \textcolor{black}{(45.87)} & 24.39 \\
   RL & 31.76 \textcolor{black}{(12.81)} & 97.13 \textcolor{black}{(2.85)} & 41.25 \textcolor{black}{(4.36)} & 68.24 \textcolor{black}{(12.81)} & 8.21 \\
   GA & 77.98 \textcolor{black}{(33.41)} & 93.73 \textcolor{black}{(6.26)} & 48.67 \textcolor{black}{(3.06)} & 22.02 \textcolor{black}{(33.41)} & 19.03 \\
   IU & 95.85 \textcolor{black}{(51.28)} & 95.46 \textcolor{black}{(4.52)} & 55.67 \textcolor{black}{(10.06)} & 4.15 \textcolor{black}{(51.28)} & 29.29 \\
   BE & 94.89 \textcolor{black}{(50.32)} & 94.94 \textcolor{black}{(5.05)} & 51.43 \textcolor{black}{(5.82)} & 5.11 \textcolor{black}{(50.32)} & 27.88 \\
   BS & 94.33 \textcolor{black}{(49.76)} & 94.59 \textcolor{black}{(5.39)} & 50.81 \textcolor{black}{(5.20)} & 5.67 \textcolor{black}{(49.76)} & 27.53 \\
   $l_1$-sparse & 95.83 \textcolor{black}{(51.26)} & 95.46 \textcolor{black}{(4.53)} & 55.17 \textcolor{black}{(9.56)} & 4.17 \textcolor{black}{(51.25)} & 29.15 \\
   SFRon & 51.20 \textcolor{black}{(6.63)} & 99.79 \textcolor{black}{(0.19)} & 45.49 \textcolor{black}{(0.12)} & 48.80 \textcolor{black}{(6.63)} & 3.39  \\
   SalUn & 39.88 \textcolor{black}{(4.69)} & 99.95 \textcolor{black}{(0.03)} & 43.47 \textcolor{black}{(2.14)} & 60.12 \textcolor{black}{(4.69)} & 2.89 \\
       \midrule
   \rowcolor{gray!10} \textbf{FalW} & 46.94 \textcolor{black}{(2.37)} & 99.98 \textcolor{black}{(0.00)} & 46.11 \textcolor{black}{(0.50)} & 53.06 \textcolor{black}{(2.37)} & \underline{\textbf{1.31}} \\
   \bottomrule
\end{tabular}
   }

   \resizebox{0.6\textwidth}{!}{\begin{tabular}{l|cccc>{\columncolor{gray!10}}c}
      \toprule
      & \multicolumn{5}{c}{\textbf{Tiny-Imagenet}, \textbf{Vgg-16} , $\mathbf{\gamma=1}$} \\
      \cmidrule(lr){2-6}
      \textbf{Method} & \multicolumn{5}{c}{\textbf{Random Forget (10\%)}} \\
      \cmidrule(lr){2-6}
       & \textbf{FA} & \textbf{RA} & \textbf{TA} & \textbf{MIA} & \textbf{Avg. Gap} \\
      \midrule
      Retrain & 25.63 \textcolor{black}{(0.00)} & 99.98 \textcolor{black}{(0.00)} & 48.83 \textcolor{black}{(0.00)} & 74.36 \textcolor{black}{(0.00)} & 0.00 \\
          \midrule
      FT & 86.97 \textcolor{black}{(61.33)} & 99.98 \textcolor{black}{(0.00)} & 53.95 \textcolor{black}{(5.12)} & 13.03 \textcolor{black}{(61.33)} & 31.95 \\
      RL & 18.50 \textcolor{black}{(7.13)} & 99.83 \textcolor{black}{(0.15)} & 48.51 \textcolor{black}{(0.32)} & 81.50 \textcolor{black}{(7.14)} & 3.69 \\
      GA & 77.98 \textcolor{black}{(52.34)} & 93.73 \textcolor{black}{(6.26)} & 48.67 \textcolor{black}{(0.16)} & 22.02 \textcolor{black}{(52.34)} & 27.77 \\
      IU & 96.29 \textcolor{black}{(70.66)} & 95.46 \textcolor{black}{(4.52)} & 55.71 \textcolor{black}{(6.88)} & 3.71 \textcolor{black}{(70.65)} & 38.18 \\
      BE & 74.92 \textcolor{black}{(49.29)} & 92.61 \textcolor{black}{(7.37)} & 45.89 \textcolor{black}{(2.94)} & 25.08 \textcolor{black}{(49.28)} & 27.22 \\
      BS & 72.60 \textcolor{black}{(46.97)} & 92.42 \textcolor{black}{(7.57)} & 46.25 \textcolor{black}{(2.58)} & 27.40 \textcolor{black}{(46.96)} & 26.02 \\
      $l_1$-sparse & 96.33 \textcolor{black}{(70.70)} & 95.47 \textcolor{black}{(4.52)} & 55.19 \textcolor{black}{(6.36)} & 3.67 \textcolor{black}{(70.69)} & 38.07 \\
      SFRon & 22.01 \textcolor{black}{(3.62)} & 97.10 \textcolor{black}{(2.88)} & 44.23 \textcolor{black}{(4.60)} & 77.98 \textcolor{black}{(3.62)} & 3.68  \\
      SalUn & 22.68 \textcolor{black}{(2.95)} & 99.18 \textcolor{black}{(0.80)} & 46.11 \textcolor{black}{(2.72)} & 71.02 \textcolor{black}{(3.34)} & 2.45 \\
          \midrule
      \rowcolor{gray!10} \textbf{FalW} & 25.70 \textcolor{black}{(0.07)} & 99.98 \textcolor{black}{(0.00)} & 49.11 \textcolor{black}{(0.28)} & 73.30 \textcolor{black}{(1.06)} & \underline{\textbf{0.35}} \\
      \bottomrule
   \end{tabular}
   }

\end{table*}

\begin{table*}[t!]
   \centering
   \caption{Results on the \textbf{Cifar-10} dataset. ($\cdot$) indicates performance gaps between approximate unlearning methods and the Retrain method across metrics.}
   \label{tab:Comparative_Results_Cifar10}
   \resizebox{0.6\textwidth}{!}{
      \begin{tabular}{l|cccc>{\columncolor{gray!10}}c}
      \toprule
      & \multicolumn{5}{c}{\textbf{CIFAR-10}, \textbf{Vgg-16} , $\mathbf{\gamma=1/2}$} \\
      \cmidrule(lr){2-6}
      \textbf{Method} & \multicolumn{5}{c}{\textbf{Random Forget (40\%)}} \\
      \cmidrule(lr){2-6}
         & \textbf{FA} & \textbf{RA} & \textbf{TA} & \textbf{MIA} & \textbf{Avg. Gap} \\
      \midrule
      Retrain & 65.94 \textcolor{black}{(0.00)} & 100.00 \textcolor{black}{(0.00)} & 78.62 \textcolor{black}{(0.00)} & 34.06 \textcolor{black}{(0.00)} & 0.00 \\
            \midrule
      FT & 68.42 \textcolor{black}{(2.48)} & 96.51 \textcolor{black}{(3.48)} & 78.37 \textcolor{black}{(0.25)} & 31.58 \textcolor{black}{(2.48)} & 2.18 \\
      RL & 72.37 \textcolor{black}{(6.43)} & 98.87 \textcolor{black}{(1.13)} & 81.15 \textcolor{black}{(2.53)} & 27.63 \textcolor{black}{(6.43)} & 4.13 \\
      GA & 58.67 \textcolor{black}{(7.27)} & 84.28 \textcolor{black}{(15.71)} & 67.21 \textcolor{black}{(11.41)} & 41.33 \textcolor{black}{(7.27)} & 10.42 \\
      IU & 98.84 \textcolor{black}{(32.90)} & 99.20 \textcolor{black}{(0.80)} & 90.47 \textcolor{black}{(11.85)} & 1.16 \textcolor{black}{(32.90)} & 19.61 \\
      BE & 91.61 \textcolor{black}{(25.67)} & 98.26 \textcolor{black}{(1.74)} & 85.38 \textcolor{black}{(6.76)} & 8.39 \textcolor{black}{(25.67)} & 14.96 \\
      BS & 82.61 \textcolor{black}{(16.67)} & 95.93 \textcolor{black}{(4.06)} & 80.83 \textcolor{black}{(2.21)} & 17.39 \textcolor{black}{(16.67)} & 9.90 \\
      $l_1$-sparse & 82.58 \textcolor{black}{(16.64)} & 95.44 \textcolor{black}{(4.55)} & 82.95 \textcolor{black}{(4.33)} & 17.42 \textcolor{black}{(16.64)} & 10.54 \\
      SFRon & 68.08 \textcolor{black}{(2.14)} & 97.10 \textcolor{black}{(2.90)} & 82.15 \textcolor{black}{(3.53)} & 30.91 \textcolor{black}{(3.15)} & 2.93  \\
      SalUn & 63.84 \textcolor{black}{(2.09)} & 96.04 \textcolor{black}{(3.96)} & 77.03 \textcolor{black}{(1.59)} & 33.16 \textcolor{black}{(0.91)} & 2.14 \\
            \midrule
      \rowcolor{gray!10} \textbf{FalW} & 65.64 \textcolor{black}{(0.30)} & 99.99 \textcolor{black}{(0.00)} & 78.01 \textcolor{black}{(0.61)} & 33.36 \textcolor{black}{(0.70)} & \underline{\textbf{0.40}} \\
      \bottomrule
      \end{tabular}
   }
\end{table*}

\section{Limitations}
\label{appe:limitations}
While our proposed FaLW method demonstrates notable success in addressing and correcting deviations in machine unlearning, we acknowledge a primary limitation: its reliance on auxiliary data to evaluate the model's state during the forgetting process. This dependency on external data may limit the applicability of FaLW in resource-constrained or data-sensitive environments.
Addressing this limitation constitutes a crucial direction for our future work. 

\section{LLM Usage Statement}
\label{appe:llm_usage}
In the preparation of this manuscript, we utilized a large language model (LLM) as a general-purpose writing assistant. The role of the LLM was strictly limited to improving the language and readability of our paper. This included tasks such as correcting grammatical errors, refining sentence structure for clarity, and enhancing overall prose. We confirm that the LLM did not contribute to the core research ideas, experimental design, data analysis, or the generation of any substantive content. All intellectual contributions, including the concepts, methodology, and conclusions presented in this paper, are solely the work of the human authors.

\begin{table}[p]
    \centering
    \caption{Plug-and-Play Experimental Results for FaLW. Comparison of baseline methods before and after FaLW integration under CIFAR-100, ResNet-18, $\gamma = 1/4$, 20\% Random Forget setting.}
    \label{tab:plug-and-play}
    \begin{tabular}{@{}lccccc@{}}
    \toprule
    \textbf{Method} & \textbf{FA} & \textbf{RA} & \textbf{TA} & \textbf{MIA} & \textbf{Avg.Gap}\\ 
    \midrule
    Retrain & 62.09 & 99.95 & 62.10 & 37.91 & 0.00 \\ 
    \midrule
    GA & 95.22 & 96.66 & 68.83 & 4.78 & 19.07 \\
    \textbf{GA + FaLW} & 89.678 & 90.625 & 63.48 & 10.32 & \textbf{16.47} \\ 
    \midrule
    RL & 47.49 & 96.99 & 61.46 & 42.51 & 5.70 \\
    \textbf{RL + FaLW} & 62.244 & 99.969 & 66.46 & 37.75 & \textbf{1.17} \\ 
    \midrule
    SFRon & 65.30 & 99.98 & 66.42 & 33.69 & 2.95 \\
    \textbf{SFRon + FaLW} & 62.856 & 99.653 & 62.77 & 37.144 & \textbf{0.62} \\ 
    \midrule
    SalUn & 59.71 & 96.92 & 60.11 & 40.29 & 2.44 \\
    \textbf{SalUn + FaLW} & 61.74 & 99.98 & 62.69 & 38.26 & \textbf{0.33} \\ 
    \bottomrule
    \end{tabular}
\end{table}

\section{Plug-and-Play Characteristic Analysis of FaLW}
\label{sec:appendix_plug_and_play}
As mentioned in the main text, FaLW is designed as a plug-and-play method. It is not a standalone unlearning algorithm but rather a forgetting-aware loss reweighting strategy intended for integration with existing methods. To validate the efficacy of FaLW as a plug-and-play universal module, we integrated it into four baseline methods: GA \citep{thudi2022unrolling}, RL \citep{golatkar2020eternal}, SalUn \citep{fan2024salun}, and SFRon \citep{huang2024unified}. In terms of implementation, since SFRon also proposed a weighting term, we replaced its original dynamic weighting method with our FaLW's dynamic weighting. For the other three methods, we directly applied our weighting to the loss of the forget set samples. We conducted experiments on the CIFAR-100 dataset with the ResNet-18 architecture, a 20\% forget rate, and $\gamma=1/4$. The performance comparison before and after integrating FaLW is presented in Table \ref{tab:plug-and-play}. As the results show, the introduction of FaLW brings unlearning performance improvements to all baseline methods.

\begin{figure}[htbp]
    \centering
    \includegraphics[width=\textwidth]{./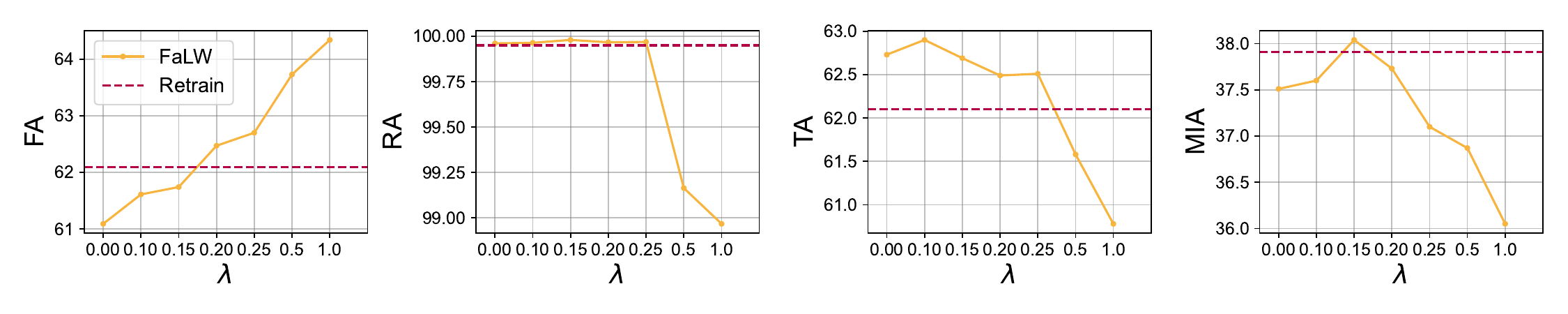}
    \caption{Hyperparameter sensitivity analysis for $\tau$ on CIFAR-100 (ResNet-18, $\gamma=1/4$, 20\% forget rate). The dashed red line represents the "Retrain" baseline performance for each metric.}
    \label{fig:hyperparam}
\end{figure}

\section{Hyperparameter Analysis for $\tau$}
\label{sec:appendix_hyperparam}
We conduct a sensitivity analysis for the hyperparameter $\tau$, which controls the exponent of the balance factor. The analysis is performed on the CIFAR-100 dataset using a ResNet-18 model, with a 20\% forget rate and an imbalance ratio of $\gamma=1/4$. We vary $\tau$ within the range [0, 1.0]. The results are illustrated in Figure \ref{fig:hyperparam}. The performance across all metrics (FA, RA, TA, and MIA) is strongest and most stable when $\tau$ is set within the [0.1, 0.2] range. As $\tau$ increases beyond this optimal range, the model's deviation correction becomes overly sensitive, leading to a degradation in unlearning performance.

\section{Clarification about use of the normal distribution}
\label{sec:appendix_distribution}

Our initial choice of the Normal distribution was driven by practicality and robustness. We employ it not to perfectly fit the data density, but to derive a stable, efficient metric for "deviation" (z-score). In dynamic optimization scenarios, the Gaussian assumption offers computational simplicity and stable parameter estimation ($\mu$, $\sigma$), which helps avoid gradient instability.

To evaluate the impact of using a distribution with a matching support of $[0, 1]$, we implemented a variant named \textbf{FaLW-beta}. In this variant, we replace the Gaussian model with a Beta distribution. We estimate the parameters $\alpha_c$ and $\beta_c$ via Maximum Likelihood Estimation (MLE) on the validation set and adapt our dynamic weight formulation using the Beta Cumulative Distribution Function (CDF), $F(p_i) = \text{BetaCDF}(p_i | \alpha_c, \beta_c)$. We then map the probability to a deviation score $s_i \in [-1, 1]$:
\begin{equation}
   w_i = 1+\text{sign}(s_i)\cdot(|s_i|)^{\frac{1}{\mathcal{B}_i}}, \quad \text{where } s_i = 2\cdot (F(p_i)-0.5)
\end{equation}

We compared the FaLW-normal and FaLW-beta variants on the CIFAR-100 and Tiny-ImageNet datasets. The results are summarized in Table~\ref{tab:beta_comparison}. While FaLW-beta yields reasonable performance, it consistently underperforms FaLW-normal, exhibiting a higher Average Gap. Empirically, we observed that the flexibility of the Beta distribution leads to high sensitivity and erratic weight fluctuations during the unlearning process, likely due to estimation errors in $\alpha_c$ and $\beta_c$. In contrast, the Normal distribution acts as a robust proxy, providing smoother gradients and greater stability "out of the box." Consequently, we retain the Normal distribution in our primary method for its robustness.

\begin{table}[ht]
   \centering
   \caption{Comparison between FaLW-normal and FaLW-beta variants on CIFAR-100 and Tiny-ImageNet.}
   \label{tab:beta_comparison}
   \resizebox{0.85\textwidth}{!}{
   \begin{tabular}{llcccc>{\columncolor{white!10}}c}
   \toprule
   \textbf{Dataset (Setting)} & \textbf{Method} & \textbf{FA} & \textbf{RA} & \textbf{TA} & \textbf{MIA} & \textbf{Avg. Gap} $\downarrow$ \\
   \midrule
   \multirow{3}{*}{\shortstack[l]{\textbf{CIFAR-100} \\ (ResNet-18, $\gamma=1/4$) \\ (20\% Forget)}} 
   & Retrain & 62.09 & 99.95 & 62.10 & 37.91 & 0.00 \\
   & \textbf{FaLW-normal} & 61.74 & 99.98 & 62.69 & 38.26 & \textbf{0.33} \\
   & FaLW-beta & 59.96 & 99.95 & 63.06 & 39.04 & 1.06 \\
   \midrule
   \multirow{3}{*}{\shortstack[l]{\textbf{Tiny-ImageNet} \\ (VGG-16, $\gamma=1$) \\ (10\% Forget)}} 
   & Retrain & 25.63 & 99.98 & 48.83 & 74.36 & 0.00 \\
   & \textbf{FaLW-normal} & 25.70 & 99.98 & 49.11 & 73.30 & \textbf{0.35} \\
   & FaLW-beta & 26.91 & 99.97 & 48.77 & 72.09 & 0.91 \\
   \bottomrule
   \end{tabular}
   }
\end{table}

\end{document}